\documentclass{article} 
\usepackage{iclr2022_conference,times}


\usepackage{amsmath,amsfonts,bm}









\def\eqref#1{equation~\ref{#1}}









\def\1{\bm{1}}










\DeclareMathAlphabet{\mathsfit}{\encodingdefault}{\sfdefault}{m}{sl}
\SetMathAlphabet{\mathsfit}{bold}{\encodingdefault}{\sfdefault}{bx}{n}













\usepackage{hyperref}
\usepackage{url}
\usepackage{multirow}
\usepackage{makecell} 
\iclrfinalcopy 



\iclrfinalcopy

\author{Yu-Neng Chuang*$^1$,  Guanchu Wang*$^{1}$, Fan Yang$^{1}$, Quan Zhou$^2$, Pushkar Tripathi$^{2}$, \\ \textbf{Xuanting Cai$^{2}$, and Xia Hu$^1$} \\
$^1$Rice University, $^2$Meta Platforms, Inc.\\
\texttt{\{ynchuang, guanchu.wang, fyang, xia.hu\}@rice.edu},\\
\texttt{\{quanz, pushkart, caixuanting\}@fb.com} 
}

%

\usepackage[utf8]{inputenc} 
\usepackage[T1]{fontenc}    
\usepackage{hyperref}       
\usepackage{url}            
\usepackage{booktabs}       
\usepackage{amsfonts}       
\usepackage{nicefrac}       
\usepackage{microtype}      
\usepackage{xcolor}         

\usepackage{enumitem}
\usepackage{amsmath}
\usepackage{mathrsfs}
\usepackage{amssymb}
\usepackage{subfigure}
\usepackage{graphicx}
\usepackage{amsthm}
\usepackage{color}
\usepackage{mathrsfs}
\usepackage{amsfonts}
\usepackage{algorithmic}
\usepackage[ruled,vlined,linesnumbered]{algorithm2e}
\usepackage{wrapfig}
\usepackage{hyperref}
\usepackage{blindtext}
\usepackage{caption}
\usepackage{makecell}

%


\def\Algnameline{\underline{CO}ntrastive \underline{R}eal-\underline{T}ime e\underline{X}planation}
\def\Algname{Contrastive Real-time Explanation}
\def\Algnameabbr{CoRTX}
\DeclareMathAlphabet\mathbfcal{OMS}{cmsy}{b}{n}
\SetKwComment{Comment}{/* }{ */}

\title{CoRTX: Contrastive Framework for Real-time Explanation}

\newtheorem{theorem}{Theorem}

\begin{document}

\maketitle
\renewcommand{\thefootnote}{\fnsymbol{footnote}}
\footnotetext[1]{These authors contributed equally to this work.}

\vspace{-0.4cm}
\begin{abstract}
Recent advancements in explainable machine learning provide effective and faithful solutions for interpreting model behaviors. However, many explanation methods encounter efficiency issues, which largely limit their deployments in practical scenarios. 
Real-time explainer (RTX) frameworks have thus been proposed to accelerate the model explanation process by learning a one-feed-forward explainer.
Existing RTX frameworks typically build the explainer under the supervised learning paradigm, which requires large amounts of explanation labels as the ground truth.
Considering that accurate explanation labels are usually hard to obtain due to constrained computational resources and limited human efforts, effective explainer training is still challenging in practice.
In this work, we propose a \Algnameline{}~(\Algnameabbr{}) framework to learn the explanation-oriented representation and relieve the intensive dependence of explainer training on explanation labels.  
Specifically, we design a synthetic strategy to select positive and negative instances for the learning of explanation.
Theoretical analysis show that our selection strategy can benefit the contrastive learning process on explanation tasks. 
Experimental results on three real-world datasets further demonstrate the efficiency and efficacy of our proposed \Algnameabbr{} framework. Our source code is available at: \href{https://github.com/ynchuang/CoRTX-720}{ \texttt{https://github.com/ynchuang/CoRTX-720}}

\end{abstract}

\vspace{-2mm}
\section{Introduction}
\vspace{-1mm}
The remarkable progress in explainable machine learning (ML) significantly improves the model transparency to human beings~\citep{du2019techniques}. However, applying explainable ML techniques to real-time scenarios remains to be a challenging task.
Real-time systems typically require model explanation to be not only effective but also efficient~\citep{stankovic1992real}.
Due to the requirements from both stakeholders and social regulations~\citep{goodman2017european, floridi2019establishing}, the efficient model explanation is necessary for the real-time ML systems, such as the controlling systems~\citep{steel2010web}, online recommender systems~\citep{yang2018towards}, and healthcare monitoring systems~\citep{gao2017interpretable}.
Nevertheless, existing work on non-amortized explanation methods has high explanation latency, including LIME~\citep{ribeiro2016should}, KernelSHAP~\citep{lundberg2017unified}. 
These methods rely on either multiple perturbations or backpropagation in deep neural networks~(DNN) for deriving explanation~\citep{covert2021improving, liu2021synthetic}, which is time-consuming and limited for deployment in real-time scenarios. 

Real-time explainer~(RTX) frameworks have thus been proposed to address such efficiency issues and provide effective explanations for real-time systems~\citep{dabkowski2017real,jethani2021fastshap}.
Specifically, RTX learns an overall explainer on the training set by using the ground-truth explanation labels obtained through either exact calculation or approximation. RTX provides the explanation for each local instance via a single feed-forward process.
Existing efforts on RTX can be categorized into two lines of work.
The first line~\citep{schwab2019cxplain, jethani2021fastshap,covert2022learning} explicitly learns an explainer to minimize the estimation error regarding to the approximated explanation labels.
The second line~\citep{dabkowski2017real, chen2018learning, kanehira2019learning} trains a feature mask generator subject to certain constraints on pre-defined label distribution.
Despite the effectiveness of existing RTX frameworks, recent advancements still rely on the large amounts of explanation labels under the supervised learning paradigm. 
The computational cost of obtaining explanation labels is extremely high~\citep{roth1988introduction, winter2002shapley}, which thereby limits the RTX's deployment in real-world scenarios.

To tackle the aforementioned challenges, we propose a \Algnameline{}~(\Algnameabbr{}) framework based on the contrastive learning techniques. 
\Algnameabbr{} aims to learn the latent explanation of each data instance without any ground-truth explanation label. The latent explanation of an instance is defined as a vector encoded with explanation information.
Contrastive learning has been widely exploited for improving the learning processes of downstream tasks by providing well-pretrained representative embeddings ~\citep{arora2019theoretical,he2020momentum}. 
In particular, task-oriented selection strategies of positive and negative pairs~\citep{chen2020simple, khosla2020supervised} can shape the representation properties through contrastive learning.
Motivated by the such contrastive scheme, \Algnameabbr{} develops an explanation-oriented contrastive framework to learn the latent explanation, with the goal of further fine-tuning an explanation head in the downstream tasks.

\Algnameabbr{} learns the latent explanation to deal with the explanation tasks by minimizing the contrastive loss~\citep{van2018representation}.
Specifically, \Algnameabbr{} designs a synthetic positive and negative sampling strategy to learn the latent explanation. The obtained latent explanation can then be transformed to feature attribution or ranking by fine-tuning a corresponding explanation head using a tiny amount of explanation labels.
Theoretical analysis and experimental  results demonstrate that \Algnameabbr{} can successfully provide the effective latent explanation for feature attribution and ranking tasks. Our contributions can be summarized as follows:
\begin{itemize}[leftmargin=5.5mm, topsep=-1mm]

    \item \Algnameabbr{} provides a contrastive framework for deriving latent explanation, which can effectively reduce the required amounts of explanation label;

    \item Theoretical analysis indicate that \Algnameabbr{} can effectively learn the latent explanation over the training set and strictly bound the explanation error; 
    
    \item Experimental results demonstrate that \Algnameabbr{} can efficiently provide explanations to the target model, which is applicable to both tabular and image data.
\end{itemize}

\vspace{-1mm}
\section{Preliminary}
\vspace{-1mm}
\subsection{Notations}
We consider an arbitrary target model $f(\cdot)$ to interpret. Let input feature be $\boldsymbol{x} = \left[ x_1, \cdots, x_M \right] \in \mathcal{X}$, where $x_i$ denote the value of feature $i$ for $1 \!\leq\! i \!\leq\! M$. 
The contribution of each feature to the model output can be treated as a cooperative game on the feature set $\mathcal{X}$. 
Specifically, the preceding difference $f(\widetilde{\boldsymbol{x}}_{\mathcal{S} \cup \{i\}}) \!-\! f(\widetilde{\boldsymbol{x}}_{\mathcal{S}})$ indicates the contribution of feature $i$ under feature subset $\mathcal{S} \subseteq \mathbfcal{U} \setminus \{i\}$, where $\mathbfcal{U}$ is the entire feature space. 
The overall contribution of feature $i$ is formalized as the average preceding difference considering all possible feature subsets $\mathcal{S}$, which can be formally given by
\begin{equation}
\label{eq:value_func}
    \phi_i(\boldsymbol{x}) := \mathbb{E}_{\mathcal{S} \subseteq \mathbfcal{U} \setminus \{ i \}} \left[
    f(\widetilde{\boldsymbol{x}}_{\mathcal{S} \cup \{i\}}) - f(\widetilde{\boldsymbol{x}}_{\mathcal{S}})\right],
\end{equation}
where $\widetilde{\boldsymbol{x}}_{\mathcal{S}} \!=\! \bold{S} \odot \boldsymbol{x} \!+\! (\textbf{1}\!-\!\bold{S}) \odot \boldsymbol{x}_r$ denotes the perturbed sample, $\bold{S} \!=\! \textbf{1}_{\mathcal{S}} \in \{0, 1\}^{M}$ is a masking vector of $\mathcal{S}$, and $\boldsymbol{x}_r \!=\! \mathbb{E}[ \boldsymbol{x} \!\mid\! \boldsymbol{x} \sim P(\boldsymbol{x})]$ denotes the reference values\footnote{Other statistic measurement can also be adopted for generating the reference value.} from feature distribution $P(\boldsymbol{x})$.
The computational complexity of Equation~\ref{eq:value_func} grows exponentially with the feature number $M$, which cumbers its application to real-time scenarios. To this end, we propose an efficient explanation framework for real-time scenarios in this work.  

\subsection{Real-time Explainer}
Different from the existing non-amortized explanation methods~\citep{lundberg2017unified, lomeli2019antithetic} that utilize local surrogate models for explaining data instances, RTX trains a global model to provide fast explanation via one feed-forward process. Compared with the existing methods, the advantages of RTX mainly lie in two folds: (1) faster explanation generation; and (2) more robust explanation derivation. 
Generally, existing RTXs attempt to learn the overall explanation distribution using two lines of methodologies, which are Shapley-sampling-based approaches~\citep{wang2021shapley,jethani2021fastshap,covert2022learning} and feature-selection-based approaches~\citep{chen2018learning, dabkowski2017real, kanehira2019learning}. The first line enforces the explainer to simulate a given approximated Shapley distribution for generating explanation results. A representative work, FastSHAP~\citep{jethani2021fastshap}, exploits a DNN model to capture the Shapley distribution among training instances for efficient real-time explanations.
The second line assumes the specific pre-defined feature patterns or distributions, and formulates the explainer learning process based on the given assumptions. One of the work~\citep{chen2018learning} provides a feature masking generator for real-time feature selection. The training process of the mask generator is under the constraint of the given ground-truth label distribution.
Both Shapley-sampling-based and feature-selection-based methods highly depend on explanation labels during the training phase. Although labels may bring strong supervised information to the explanation models that benefit explanation tasks, generating high-quality labels typically requires high computational time, which makes the labels hard to acquire.
In this work, we propose an unsupervised learning paradigm \Algnameabbr{}, which can significantly reduce the dependency on the explanation labels. 
The proposed \Algnameabbr{} benefits the explanation tasks by exploiting a contrastive framework to generate latent explanations without relying on the labels.

\vspace{-1mm}
\subsection{Limitation of Supervised Framework}
\vspace{-1mm}
\begin{wrapfigure}{r}{0.35\textwidth}
  \vspace{-12pt}
  \begin{center}
    \includegraphics[width=0.30\textwidth]{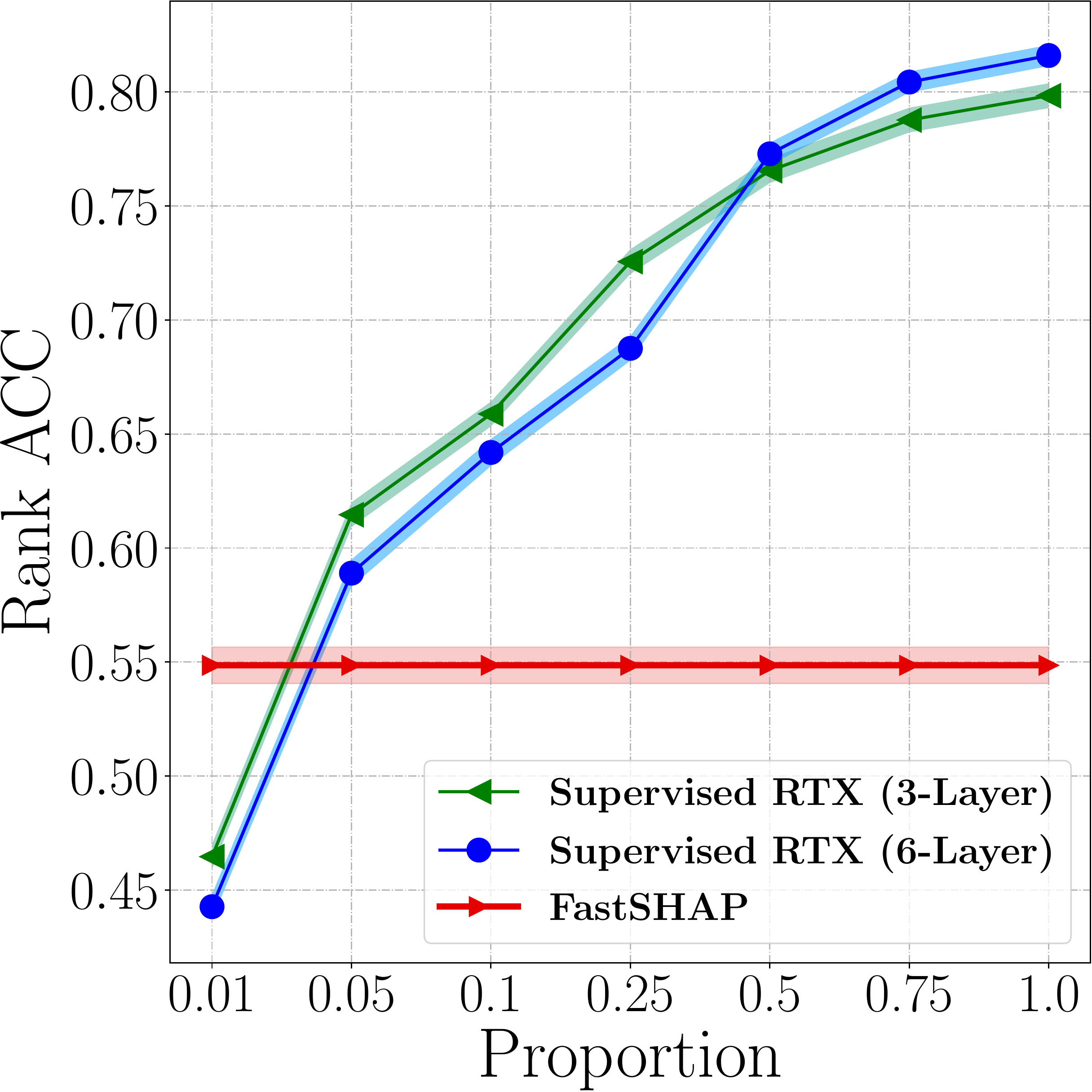}
  \end{center}
  \vspace{-8pt}
  \caption{The Rank ACC v.s. the explanation label ratio in RTX.}
  \vspace{-8pt}
  \label{fig:interpret_error_annoatation}
\end{wrapfigure}

Supervised RTX typically relies on large quantities of ground-truth explanation labels, which limits its application in practice.
We experimentally show that supervised RTX suffers from the performance degradation when explanation labels are insufficient.
Specifically, taking the Shapley value as the ground truth, supervised RTX learns to minimize the cross entropy on the training set, and then estimates the feature ranking on the testing set for evaluation.
The results on Census dataset~\citep{UCI:2007} demonstrate the intensive dependence of supervised RTX on the amount of explanation labels.
The performance of feature ranking versus the exploitation ratio of explanation label is shown in Figure~\ref{fig:interpret_error_annoatation}, where the implementation details are introduced in Appendix~\ref{appendix:baseline_detail}.
Here, we observe that the performance of feature ranking significantly enhances as the ratio of label increases. Supervised RTXs outperform FastSHAP, which is trained with approximated labels, under 5\% label usage.
However, the complexity of calculating Shapley values generally grows exponentially with the feature number, which makes it challenging to train RTX with a large amount of labels.
We propose \Algnameabbr{} in this paper to train the explainer in a light-label manner, where only a small amount of explanation labels are used for fine-tuning the downstream head.

\begin{figure}
\centering
    \includegraphics[width=0.9\textwidth]{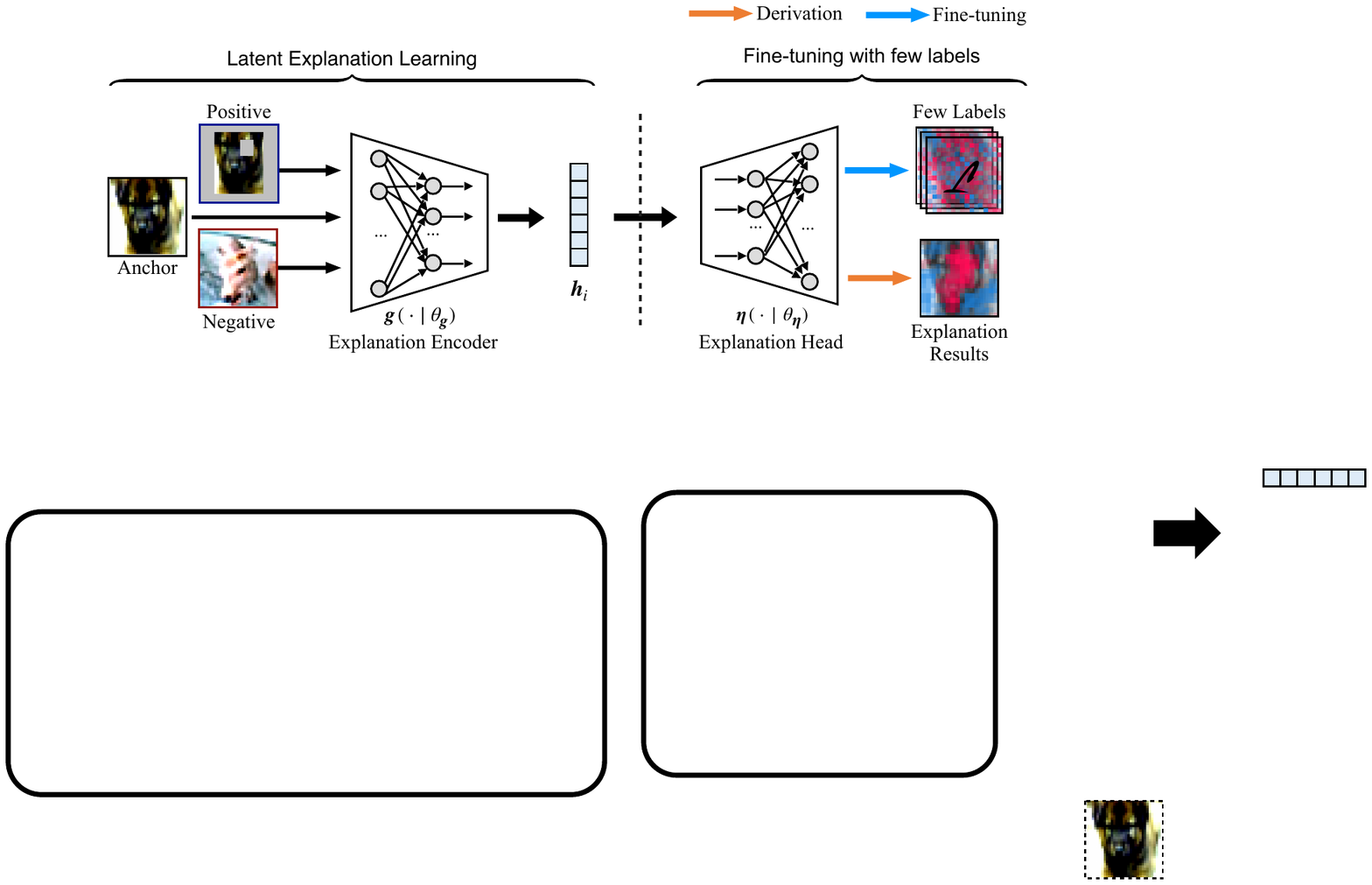} 
    \caption{The explanation pipeline of the \Algnameabbr{} framework, where $\boldsymbol{h}_{i}$ denotes the latent explanation.}
    \vspace{-4.5mm}
\label{fig:framework} 
\end{figure}

\section{\Algname{}}
\vspace{-1mm}
We systematically introduce the \Algnameabbr{} framework in this section. Figure~\ref{fig:framework} illustrates the overall pipeline of \Algnameabbr{}. In particular, our pipeline first uses an explanation encoder to learn the latent explanation in a contrastive way, and then an explanation head is further tuned with a small amount of explanation labels.
An explanation-oriented positive augmentation and negative sampling strategy is designed for the latent explanation learning process. 

\subsection{Positive Data Augmentation toward Similar Explanation}
\label{sec:posaug}

Different from the conventional data augmentation~\citep{liu2021synthetic} for representation learning, \Algnameabbr{} develops an explanation-oriented data augmentation strategy for training RTX, where the augmented positive and negative instances of explanation are proved to be beneficial~\citep{kim2016examples,dhurandhar2018explanations}.
Specifically, considering an anchor data instance $\boldsymbol{x} \in \mathcal{X}$, the set of synthetic positive instances $\mathcal{X}^{+}$ is generated via $m$ times of independent perturbations on $\boldsymbol{x}$:
\begin{equation}
    \mathcal{X}^{+} = \{ \bold{S}_i \odot \boldsymbol{x} + (\textbf{1} - \bold{S}_i) \odot \boldsymbol{x}_r \mid \bold{S}_i \sim \mathcal{B}(M, \lambda), 1 \leq i \leq m \},
    \label{eq:masking}
\end{equation}
where $\bold{S}_i$ is sampled from the $M$-dim binomial distribution, and $\lambda$ controls the probability density.

\Algnameabbr{} provides the explanation-oriented data augmentation by selecting the compact positive instances $\widetilde{\boldsymbol{x}}^{+}$ from $\mathcal{X}^{+}$. 
Intuitively, $\widetilde{\boldsymbol{x}}^{+}$ is expect to obtain the most similar model explanation as the corresponding $\boldsymbol{x}$, which means the feature attribution between $\widetilde{\boldsymbol{x}}^{+}$ and $\boldsymbol{x}$ are highly similar.
However, the greedy selection of $\widetilde{\boldsymbol{x}}^{+}$ needs the explanation score of each instance in $\mathcal{X}^{+}$, which is hard to access in practice. To tackle the issue, we propose Theorem~\ref{thm:1} to guide the selection of $\widetilde{\boldsymbol{x}}^{+}$ through a set of synthetic positive instances without accessing the explanation scores. The proof of Theorems~\ref{thm:1} is shown in Appendix~\ref{appendix:proof_thm1}.

\begin{theorem}[\textbf{Compact Alignment}]
    Let $f(\boldsymbol{x})$ be a $K_{f}$-Lipschitz~\footnote{Lipschitz continuity can be refered to \citep{virmaux2018lipschitz}.} continuous function and $\boldsymbol{\Phi}(\boldsymbol{x}) = [\phi_1(\boldsymbol{x}), \cdots, \phi_M(\boldsymbol{x})]$ be the feature importance scores, where $\phi_i(\boldsymbol{x}) \!:=\! \mathbb{E}_{\mathcal{S} \subseteq \mathbfcal{U} \setminus \{ i \}}\big[
    f(\widetilde{\boldsymbol{x}}_{\mathcal{S} \cup \{i\}}) - f(\widetilde{\boldsymbol{x}}_{\mathcal{S}}) \big]$. 
    Given a perturbed positive instance $\widetilde{\boldsymbol{x}} \!\in\! \mathcal{X}^+$ satisfying $\min_{1\leq i \leq M} \phi_i(\widetilde{\boldsymbol{x}})~\!\!\geq 0$, the explanation difference $\lVert \boldsymbol{\Phi}(\boldsymbol{x}) -  \boldsymbol{\Phi}(\widetilde{\boldsymbol{x}}) \rVert_2$ is bounded by the prediction difference $| f(\boldsymbol{x})- f(\widetilde{\boldsymbol{x}}) |$ as
    \begin{equation}
    \setlength{\abovedisplayskip}{1mm}
    \setlength{\belowdisplayskip}{2mm}
        \lVert \boldsymbol{\Phi}(\boldsymbol{x}) -  \boldsymbol{\Phi}(\widetilde{\boldsymbol{x}}) \rVert_2 \leq (1 + \sqrt{2}\gamma_0)| f(\boldsymbol{x}) - f(\widetilde{\boldsymbol{x}})| + \sqrt{M}\gamma_0,
        \label{eq:thm1:func}
    \end{equation}
    where $\gamma_0 = K_f \lVert \boldsymbol{x} \rVert_2$ and $K_f \geq 0$ is the Lipschitz constant of function $f(\cdot)$.
    \label{thm:1}
\end{theorem}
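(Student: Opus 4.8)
The plan is to derive the bound from three facts: the \emph{efficiency} (completeness) property of the attribution $\boldsymbol{\Phi}$, the sign hypothesis $\min_i \phi_i(\widetilde{\boldsymbol{x}})\ge 0$, and the $K_f$-Lipschitz continuity of $f$. The backbone is the efficiency identity: summing the marginal-contribution definition of $\phi_i$ over $i$ and telescoping along a uniformly random ordering of the features gives $\sum_{i=1}^M \phi_i(\boldsymbol{y}) = f(\boldsymbol{y}) - f(\boldsymbol{x}_r)$ for every input $\boldsymbol{y}$, with $\boldsymbol{x}_r$ the shared reference. Evaluating this at $\boldsymbol{y}=\boldsymbol{x}$ and $\boldsymbol{y}=\widetilde{\boldsymbol{x}}$ and subtracting gives $\sum_{i=1}^M(\phi_i(\boldsymbol{x})-\phi_i(\widetilde{\boldsymbol{x}})) = f(\boldsymbol{x})-f(\widetilde{\boldsymbol{x}})$. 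This is the only mechanism by which the small \emph{prediction} gap $|f(\boldsymbol{x})-f(\widetilde{\boldsymbol{x}})|$ can enter the estimate at all: a naive Lipschitz bound on $\phi_i(\boldsymbol{x})-\phi_i(\widetilde{\boldsymbol{x}})$ would only produce the much larger input distance $\|\boldsymbol{x}-\widetilde{\boldsymbol{x}}\|_2$.

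Second, I would turn the sign hypothesis into a scalar bound on the whole vector $\boldsymbol{\Phi}(\widetilde{\boldsymbol{x}})$. Because every coordinate is nonnegative, $\|\boldsymbol{\Phi}(\widetilde{\boldsymbol{x}})\|_2 \le \|\boldsymbol{\Phi}(\widetilde{\boldsymbol{x}})\|_1 = \sum_i \phi_i(\widetilde{\boldsymbol{x}}) = f(\widetilde{\boldsymbol{x}}) - f(\boldsymbol{x}_r)$, and then $f(\widetilde{\boldsymbol{x}}) - f(\boldsymbol{x}_r) \le |f(\boldsymbol{x})-f(\widetilde{\boldsymbol{x}})| + |f(\boldsymbol{x})-f(\boldsymbol{x}_r)| \le |f(\boldsymbol{x})-f(\widetilde{\boldsymbol{x}})| + \gamma_0$ by Lipschitzness. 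Third, for $\boldsymbol{\Phi}(\boldsymbol{x})$ (and symmetrically for $\boldsymbol{\Phi}(\widetilde{\boldsymbol{x}})$) I would exploit that each difference $f(\widetilde{\boldsymbol{x}}_{\mathcal{S}\cup\{i\}})-f(\widetilde{\boldsymbol{x}}_{\mathcal{S}})$ compares two perturbed samples agreeing in every coordinate but $i$, so $|\phi_i(\boldsymbol{x})|\le K_f\,|x_i-(x_r)_i|$; summing squares gives $\|\boldsymbol{\Phi}(\boldsymbol{x})\|_2 \le K_f\|\boldsymbol{x}-\boldsymbol{x}_r\|_2$, which I would absorb into $\gamma_0=K_f\|\boldsymbol{x}\|_2$ (placing the reference at the origin, or using $\|\boldsymbol{x}-\boldsymbol{x}_r\|_2\le\|\boldsymbol{x}\|_2$), and the coordinatewise version $\|\boldsymbol{\Phi}(\cdot)\|_\infty\le\gamma_0$ together with $\|v\|_2\le\sqrt M\,\|v\|_\infty$ is what produces the $\sqrt M\,\gamma_0$ term.

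With these estimates in hand, the assembly is routine: I would start from the triangle inequality, splitting $\boldsymbol{\Phi}(\boldsymbol{x})-\boldsymbol{\Phi}(\widetilde{\boldsymbol{x}})$ so that its signed/average component is pinned down by the efficiency identity (contributing $|f(\boldsymbol{x})-f(\widetilde{\boldsymbol{x}})|$) while the residual is controlled through $\|\boldsymbol{\Phi}(\boldsymbol{x})\|$ and $\|\boldsymbol{\Phi}(\widetilde{\boldsymbol{x}})\|$ via the bounds of the previous paragraph; passing between $\ell_1$, $\ell_2$, $\ell_\infty$ norms (Cauchy--Schwarz for the $\sqrt M$ factor) and using $\sqrt{a^2+b^2}\le a+b$ then collects the pieces into a bound of the stated shape $(1+\sqrt2\,\gamma_0)\,|f(\boldsymbol{x})-f(\widetilde{\boldsymbol{x}})| + \sqrt M\,\gamma_0$.

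I expect the real work to be in that last step, not in any single idea: one must arrange the decomposition so that the prediction gap survives multiplied only by $(1+\sqrt2\,\gamma_0)$ instead of quietly degrading into an $\|\boldsymbol{x}-\widetilde{\boldsymbol{x}}\|_2$ term, and one must dispose of the reference point $\boldsymbol{x}_r$ consistently everywhere so the Lipschitz estimates close in terms of $\gamma_0$ alone. A secondary check is that the expectation over subsets $\mathcal{S}$ in the definition of $\phi_i$ carries the Shapley weighting, since the efficiency identity — which the whole argument rests on — depends on exactly that weighting.
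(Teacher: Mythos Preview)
Your three ingredients---efficiency of $\boldsymbol{\Phi}$, the sign hypothesis on $\boldsymbol{\Phi}(\widetilde{\boldsymbol{x}})$, and coordinatewise Lipschitz control---are the same ones the paper uses, but the assembly is genuinely different. The paper never bounds $\|\boldsymbol{\Phi}(\boldsymbol{x})\|$ and $\|\boldsymbol{\Phi}(\widetilde{\boldsymbol{x}})\|$ separately and does not isolate a mean component. It works coordinate by coordinate: in $\phi_i(\boldsymbol{x})-\phi_i(\widetilde{\boldsymbol{x}})$, expanded as an average over masks $\mathbf{S}$, it applies the triangle inequality to separate the ``$\mathcal{S}\cup\{i\}$'' terms from the ``$\mathcal{S}$'' terms, bounds the second block uniformly by $\gamma_0$ via Lipschitz continuity, names the first block $\delta_i\ge 0$, and then asserts that under the sign hypothesis $\sum_i|\delta_i|=|f(\boldsymbol{x})-f(\widetilde{\boldsymbol{x}})|$. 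The stated constants come from bounding $\sqrt{\sum_i(\delta_i+\gamma_0)^2}$ through a three-term expansion and then invoking $\|\delta\|_2\le\|\delta\|_1=\sum_i\delta_i$; the $\sqrt{2}\,\gamma_0$ cross-term is an artifact of exactly that expansion. Your route---triangle inequality on whole vectors, with $\|\boldsymbol{\Phi}(\widetilde{\boldsymbol{x}})\|_2$ controlled by efficiency plus the sign hypothesis and $\|\boldsymbol{\Phi}(\boldsymbol{x})\|_2$ by the per-coordinate Lipschitz estimate---would more naturally yield an additive bound of the shape $|f(\boldsymbol{x})-f(\widetilde{\boldsymbol{x}})|+c\,\gamma_0$, which is arguably cleaner but will not reproduce the paper's precise $(1+\sqrt{2}\,\gamma_0)$ factor without reverse-engineering their expansion. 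Your closing caveat about the subset weighting is also on point: the paper's averages carry the uniform weight $1/2^{M-1}$, and the key identity $\sum_i\delta_i=|f(\boldsymbol{x})-f(\widetilde{\boldsymbol{x}})|$ is asserted rather than derived from a Shapley telescoping argument.
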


Theorem~\ref{thm:1} basically shows that the explanation difference can be bounded by the prediction difference between the anchor $\boldsymbol{x}$ and the perturbed positive instance $\widetilde{\boldsymbol{x}}$. The compact positive instance $\widetilde{\boldsymbol{x}}^{+}$ in Theorem~\ref{thm:1} can derive the minimal difference of model outputs and further result in a similar explanation to the anchor $\boldsymbol{x}_i$.
Following Theorem~\ref{thm:1}, \Algnameabbr{} selects $\widetilde{\boldsymbol{x}}_i^{+} \in \mathcal{X}^{+}$ by holding the minimum preceding difference between of $f(\boldsymbol{x})$ and $f(\widetilde{\boldsymbol{x}}_i)$. Formally, the selection strategy can be indicated as:
\begin{equation}
    \widetilde{\boldsymbol{x}}^{+} = \arg\min_{\widetilde{\boldsymbol{x}}_i \in \mathcal{X}^{+}} ~|f(\boldsymbol{x}) - f(\widetilde{\boldsymbol{x}}_i)|,
    \label{eq:compactpositive}
\end{equation}
where $f(\cdot)$ is the target model for explanation.

Selecting excessively positive instances has been proved to degrade contrastive learning~\citep{zhu2021improving}. 
The excessively positive instances refer to those very similar to the anchor instance.
Regarding our $\widetilde{\boldsymbol{x}}^{+}$ may become excessively positive when selected from the universe set, $\mathcal{X}^{+}$ is set to be much smaller than the entire universe set, where $|\mathcal{X}^{+}| \!=\! m \!\ll 2^M$.
For example, we set $|\mathcal{X}^{+}| \!=\! 300 \!\ll\! 2^{96}$ on the Bankruptcy dataset~\citep{liang2016financial}.
In practice, \Algnameabbr{} uses Equation~\ref{eq:compactpositive} to search the compact positive instance $\widetilde{\boldsymbol{x}}^{+}$ from $\mathcal{X}^{+}$, where the excessively positive instances may be avoided.
In this way, \Algnameabbr{} can ensure hard positive instances to be as many as possible, which benefits the latent explanation learning process.

\subsection{Explanation Contrastive Loss}
The proposed \Algnameabbr{} adopts the contrastive learning~\citep{he2020momentum, chen2020simple} to generate the latent explanation by training an explanation encoder $\textsl{g}(\cdot \mid \theta_{\textsl{g}}): \mathbb{R}^M \rightarrow \mathbb{R}^d$. Here, a positive pair and several negative pairs are exploited during the training of $\textsl{g}(\cdot \mid \theta_{\textsl{g}})$. A positive pair includes the anchor $\boldsymbol{x}_i$ and a compact positive instance $\widetilde{\boldsymbol{x}}_i^{+}$. A negative pair contains the anchor $\boldsymbol{x}_i$ and a randomly selected instance $\boldsymbol{x}_j$~($j \neq i$), where the explanations are mostly different. 
Let $\boldsymbol{h}_{i} \!=\! \boldsymbol{\textsl{g}}(\boldsymbol{x}_i \mid \theta_{\textsl{g}})$, $\widetilde{\boldsymbol{h}}_i^{+} \!=\! \boldsymbol{\textsl{g}}(\widetilde{\boldsymbol{x}}_i^{+} \mid \theta_{\textsl{g}})$ be the latent explanation of the positive pair, and $\boldsymbol{h}_i$, $\boldsymbol{h}_j = \boldsymbol{\textsl{g}}(\boldsymbol{x}_j \mid \theta_{\textsl{g}})$ be the latent explanation for a negative pair.
\Algnameabbr{} trains the encoder $\textsl{g}(\cdot \mid \theta_{\textsl{g}})$ by maximizing the dot similarity of positive pairs and minimizing the dot similarity of negative pairs. Following the criterion, $\textsl{g}(\cdot \mid \theta_{\textsl{g}})$ essentially minimizes the contrastive loss as follows:
\begin{equation}
    \mathcal{L}_{\textsl{g}} = - \log \frac{\exp{(\boldsymbol{h}_i \cdot \widetilde{\boldsymbol{h}}_i^{+}/\tau)}}{\sum_{j=1}^N \exp{(\boldsymbol{h}_i \cdot \boldsymbol{h}}_j/\tau)},
    \label{eq:contrastive_loss}
\end{equation}
where $\tau$ is a temperature hyper-parameter~\citep{wu2018unsupervised} and $N$ denotes the batch size for training. 

To provide the explanation for $f(\cdot)$ from the learned latent explanation in $\textsl{g}(\cdot \mid \theta_{\textsl{g}})$, an explanation head $\boldsymbol{\eta}(\cdot ~|~ \theta_{\boldsymbol{\eta}}): \mathbb{R}^d \rightarrow \mathbb{R}^M$ is further tuned based on the certain explanation tasks.
Specifically, $\boldsymbol{\eta}(\cdot ~|~ \theta_{\boldsymbol{\eta}})$ is tuned with a small amount of explanation labels (e.g., the Shapley values), where the computational cost is relatively affordable in real-world settings. $\boldsymbol{\eta}(\cdot ~|~ \theta_{\boldsymbol{\eta}})$ is specifically designed to cope with the different scenarios of model explanation. 
According to the effects of Equation~\ref{eq:compactpositive} and Equation~\ref{eq:contrastive_loss}, \Algnameabbr{} enforces $\boldsymbol{h}_{i}$ to be similar to $\widetilde{\boldsymbol{h}}_i^{+}$ and dissimilar to $\boldsymbol{h}_{j}$, which is consistent with the relationship of their model explanations. This enables the learned representation to contain the explanation information.
In this work, \Algnameabbr{} is studied under two common explanation scenarios, i.e., feature attribution task and feature ranking task. To ensure the explanation quality of \Algnameabbr{},
we here provide Theorem~\ref{thm:2} to show that the explanation error of \Algnameabbr{} is bounded with theoretical guarantees. The proof of Theorem~\ref{thm:2} is provided in Appendix~\ref{appendix:proof_thm2}.

\begin{theorem}[\textbf{Explanation Error Bound}]
Let $\boldsymbol{\eta}( \cdot | \theta_{\boldsymbol{\eta}})$ be $K_{\eta}$-Lipschitz continuous, and $\hat{\boldsymbol{\Phi}}(\cdot) = \boldsymbol{\eta}( \textsl{g}(\cdot |~ \theta_{\textsl{g}}) | \theta_{\boldsymbol{\eta}})$ be the estimated explanations.
If the encoder $\textsl{g}(\cdot | \theta_{\textsl{g}})$ is well-trained, satisfying $\lVert \boldsymbol{\Phi}(\boldsymbol{x}) \!-\! \hat{\boldsymbol{\Phi}}(\boldsymbol{x}) \rVert_2 \!\leq\! \mathcal{E} \!\in\! \mathbb{R}^{+}$, then the explanation error on testing instance $\boldsymbol{x}_k$ can be bounded as:
\begin{equation} 
    \lVert \boldsymbol{\Phi}(\boldsymbol{x}_k) - \hat{\boldsymbol{\Phi}}(\boldsymbol{x}_k) \rVert_2 \leq (1 + \sqrt{2}\gamma_k)| f(\boldsymbol{x}_k)- f(\widetilde{\boldsymbol{x}}_k^{+}) | + \sqrt{M}\gamma_k + \mathcal{E} +  K_{\eta} \lVert\boldsymbol{h}_{\widetilde{\boldsymbol{x}}_k^{+}} - \boldsymbol{h}_{x_k} \rVert_2,
    \label{eq:thm2}
\end{equation}
where $\widetilde{\boldsymbol{x}}_k^{+}$ is a compact positive instance, $\boldsymbol{h}_{\boldsymbol{x}_k} \!= \boldsymbol{\textsl{g}}(\boldsymbol{x}_k | \theta_{\textsl{g}})$, and $\gamma_k = K_f \lVert \boldsymbol{x}_k \rVert_2$.
\label{thm:2} 
\end{theorem}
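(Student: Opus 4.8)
The plan is to obtain \eqref{eq:thm2} by a single three–way triangle–inequality decomposition centered at the compact positive instance $\widetilde{\boldsymbol{x}}_k^{+}$, and then to bound each of the three resulting pieces using, respectively, Theorem~\ref{thm:1}, the well-trained hypothesis, and the Lipschitz continuity of the explanation head. Concretely, I would first write
\[
\lVert \boldsymbol{\Phi}(\boldsymbol{x}_k) - \hat{\boldsymbol{\Phi}}(\boldsymbol{x}_k) \rVert_2
\le \underbrace{\lVert \boldsymbol{\Phi}(\boldsymbol{x}_k) - \boldsymbol{\Phi}(\widetilde{\boldsymbol{x}}_k^{+}) \rVert_2}_{(\mathrm{I})}
+ \underbrace{\lVert \boldsymbol{\Phi}(\widetilde{\boldsymbol{x}}_k^{+}) - \hat{\boldsymbol{\Phi}}(\widetilde{\boldsymbol{x}}_k^{+}) \rVert_2}_{(\mathrm{II})}
+ \underbrace{\lVert \hat{\boldsymbol{\Phi}}(\widetilde{\boldsymbol{x}}_k^{+}) - \hat{\boldsymbol{\Phi}}(\boldsymbol{x}_k) \rVert_2}_{(\mathrm{III})},
\]
so that the three terms isolate, in order, the intrinsic explanation gap between the test point and its compact positive partner, the approximation error of the learned estimator, and the estimator's own sensitivity to the input perturbation.

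For term $(\mathrm{I})$, I would apply Theorem~\ref{thm:1} with anchor $\boldsymbol{x}=\boldsymbol{x}_k$ and perturbed positive instance $\widetilde{\boldsymbol{x}}=\widetilde{\boldsymbol{x}}_k^{+}\in\mathcal{X}^{+}$ (which, by construction via \eqref{eq:compactpositive}, is the candidate satisfying the non-negativity condition $\min_{1\le i\le M}\phi_i(\widetilde{\boldsymbol{x}}_k^{+})\ge 0$), yielding $(\mathrm{I})\le (1+\sqrt{2}\gamma_k)\,|f(\boldsymbol{x}_k)-f(\widetilde{\boldsymbol{x}}_k^{+})| + \sqrt{M}\gamma_k$ with $\gamma_k=K_f\lVert\boldsymbol{x}_k\rVert_2$. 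For term $(\mathrm{II})$, the "well-trained encoder" hypothesis $\lVert\boldsymbol{\Phi}(\cdot)-\hat{\boldsymbol{\Phi}}(\cdot)\rVert_2\le\mathcal{E}$, instantiated at $\widetilde{\boldsymbol{x}}_k^{+}$, gives $(\mathrm{II})\le\mathcal{E}$ directly. For term $(\mathrm{III})$, writing $\hat{\boldsymbol{\Phi}}=\boldsymbol{\eta}\circ\textsl{g}$ and using the $K_\eta$-Lipschitz continuity of $\boldsymbol{\eta}$ together with $\boldsymbol{h}_{\widetilde{\boldsymbol{x}}_k^{+}}=\textsl{g}(\widetilde{\boldsymbol{x}}_k^{+}\mid\theta_{\textsl{g}})$ and $\boldsymbol{h}_{\boldsymbol{x}_k}=\textsl{g}(\boldsymbol{x}_k\mid\theta_{\textsl{g}})$ gives $(\mathrm{III})=\lVert\boldsymbol{\eta}(\boldsymbol{h}_{\widetilde{\boldsymbol{x}}_k^{+}})-\boldsymbol{\eta}(\boldsymbol{h}_{\boldsymbol{x}_k})\rVert_2\le K_\eta\lVert\boldsymbol{h}_{\widetilde{\boldsymbol{x}}_k^{+}}-\boldsymbol{h}_{\boldsymbol{x}_k}\rVert_2$. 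Summing the three estimates reproduces exactly the right-hand side of \eqref{eq:thm2}.

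The argument is essentially mechanical once the decomposition above is chosen, so the only thing I would be careful about — and the closest thing to an obstacle — is the hygiene of the hypotheses at test time: namely, confirming that Theorem~\ref{thm:1} legitimately applies to the pair $(\boldsymbol{x}_k,\widetilde{\boldsymbol{x}}_k^{+})$ (in particular the non-negativity condition $\min_i\phi_i(\widetilde{\boldsymbol{x}}_k^{+})\ge 0$, which the compact-selection rule \eqref{eq:compactpositive} is intended to promote), and that the scalar $\mathcal{E}$ in the "well-trained" assumption is understood to control $\lVert\boldsymbol{\Phi}-\hat{\boldsymbol{\Phi}}\rVert_2$ over a set that actually contains the perturbed instance $\widetilde{\boldsymbol{x}}_k^{+}$, not merely the raw training inputs. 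No genuinely hard inequality beyond Theorem~\ref{thm:1} is needed.
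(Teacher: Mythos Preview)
Your proposal is correct and matches the paper's own proof essentially line for line: the paper also inserts and subtracts $\boldsymbol{\Phi}(\widetilde{\boldsymbol{x}}_k^{+})$ and $\hat{\boldsymbol{\Phi}}(\widetilde{\boldsymbol{x}}_k^{+})$, applies the triangle inequality to obtain the same three terms, and bounds them via Theorem~\ref{thm:1}, the well-trained assumption $\mathcal{E}$, and the $K_\eta$-Lipschitz property of $\boldsymbol{\eta}$, respectively. Your additional remarks on the hypothesis hygiene (the non-negativity condition for Theorem~\ref{thm:1} and the domain on which $\mathcal{E}$ must hold) are in fact more careful than what the paper spells out.
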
 

Theorem~\ref{thm:2} shows that the upper bound of explanation error depends on four different terms, which are $\gamma_k$, training error $\mathcal{E}$, $| f(\boldsymbol{x}) \!-\! f(\widetilde{\boldsymbol{x}}_k^{+}) |$, and $\lVert\boldsymbol{h}_{\widetilde{\boldsymbol{x}}_k} \!-\! \boldsymbol{h}_{\boldsymbol{x}_k} \rVert_2$. 
\Algnameabbr{} contributes to minimize the upper bound by explicitly minimizing
$| f(\boldsymbol{x})- f(\widetilde{\boldsymbol{x}}_k^{+}) |$ and $\lVert\boldsymbol{h}_{\widetilde{\boldsymbol{x}}_k} - \boldsymbol{h}_{\boldsymbol{x}_k} \rVert_2$ as follows:
\begin{itemize}[leftmargin=2em, topsep=-1mm]
    \item \Algnameabbr{} selects the compact positive instances following the selection strategy in Equation~\ref{eq:compactpositive}, which obtains the minimal value of $| f(\boldsymbol{x})- f(\widetilde{\boldsymbol{x}}_k^{+}) |$;
    
    \item \Algnameabbr{} maximizes the similarity between positive pairs while training the encoder $\textsl{g}(\cdot | \theta_{\textsl{g}})$, which minimizes the value of $\lVert\boldsymbol{h}_{\widetilde{\boldsymbol{x}}_k} - \boldsymbol{h}_{\boldsymbol{x}_k} \rVert_2$.
\end{itemize}

Besides the two terms mentioned above, 
the error $\mathcal{E}$ is naturally minimized after $\textsl{g}(\cdot \mid \theta_{\textsl{g}})$ is trained, and $\gamma_k$ keeps constant for the given $f(\cdot)$. 
Thus, we can theoretically guarantee the explanation error bound of \Algnameabbr{} by Equation~\ref{eq:thm2}.
This ensures the capability of \Algnameabbr{} to provide effective explanations to the target model over the testing set.

\subsection{Algorithm of \Algnameabbr{}}
\vspace{-2mm}
\IncMargin{1.5em}
\begin{algorithm}[b]
\SetAlgoLined
\small
\textbf{Input:} Target model $f$ and input feature values $\boldsymbol{x} = [x_1, \cdots,\! x_{\!M}]$. \\
\textbf{Output:} \mbox{Estimated explanation values of each feature $[ \hat{\phi}_1, ..., \! \hat{\phi}_M]$.}\\
    \While{not convergence}{

    Generate the synthetic positive instances using $\mathcal{X}^{+}$ from Equation~\ref{eq:masking}. \\

    Select a compact positive instance $\widetilde{\boldsymbol{x}}^{+}$ by Equation~\ref{eq:compactpositive} and the set of negative instances $\{\boldsymbol{x}_j \mid j \neq i \}$.\\

    Update $\boldsymbol{h}_{\boldsymbol{x}} = \textsl{g}(\boldsymbol{x} \mid \theta_{\textsl{g}})$ with $\widetilde{\boldsymbol{x}}^{+}$ and $\boldsymbol{x}_j$ to minimize loss function given by Equation~\ref{eq:contrastive_loss}. \\
    }
    Fine-tune $\boldsymbol{\eta}(\boldsymbol{h}_{\boldsymbol{x}} ~|~ \theta_{\boldsymbol{\eta}})$ with a small amount of explanation labels. \\

    \SetAlCapHSkip{.7em}
    \caption{\small Real-Time Explainer Training with \Algnameabbr{}}
    \label{alg:contrastive_learn}
\end{algorithm}

The outline of \Algnameabbr{} is given in Algorithm~\ref{alg:contrastive_learn}. \Algnameabbr{} follows Equation~\ref{eq:compactpositive} for compact positive selection~(lines 4-5), and learns the explanation encoder $\textsl{g}(\cdot \mid \theta_{\textsl{g}})$ according to Equation~\ref{eq:contrastive_loss}~(line 6). The training iteration terminates when $\textsl{g}(\cdot \mid \theta_{\textsl{g}})$ is converged.
Overall, $\textsl{g}(\cdot \mid \theta_{\textsl{g}})$ learns the latent explanation for each instance $\boldsymbol{x} = \left[ x_1, \cdots, x_M \right] \!\in\! \mathcal{X}$, and $\boldsymbol{\eta}(\cdot ~|~ \theta_{\boldsymbol{\eta}})$ is then fine-tuned with a small amount of explanation labels for particular tasks. The algorithmic process of \Algnameabbr{} can be expressed as $\boldsymbol{\eta}( \textsl{g}(\boldsymbol{x}_{i} \mid \theta_{\textsl{g}}) ~|~ \theta_{\boldsymbol{\eta}})$ in general.

\section{Experiments}
\vspace{-2mm}
\label{sec:exp}
In this section, we conduct experiments to evaluate the performance of \Algnameabbr{}, aiming to answer the following three research questions:
\textbf{RQ1:} How does \Algnameabbr{} perform on explanation tasks in terms of the efficacy and efficiency compared with state-of-the-art baselines?
\textbf{RQ2:} Is the latent explanation from \Algnameabbr{} effective on fine-tuning the explanation head?
\textbf{RQ3:} Does synthetic augmentation contribute to the explanation performance of \Algnameabbr{}?

\subsection{Datasets and Baselines}
\textbf{Datasets.}
Our experiments consider two tabular datasets: Census~\citep{UCI:2007} with 13 features, Bankruptcy~\citep{liang2016financial} with 96 features, and one image dataset: CIFAR-10~\citep{krizhevsky2009learning} with $32 \!\times\! 32$ pixels. The preprocessing and statistics of three datasets are provided in Appendix~\ref{appendix:dataset_detail}. \textbf{Baseline Methods.}
In Census and Bankruptcy datasets, \Algnameabbr{} is compared with two RTX methods, i.e., Supervised RTX and FastSHAP~\citep{jethani2021fastshap}, as well as two non-amortized explanation methods, i.e., KernelSHAP (KS)~\citep{lundberg2017unified} and Permutation Sampling (PS)~\citep{mitchell2021sampling}. In CIFAR-10 dataset, \Algnameabbr{} is compared with FastSHAP and other non-amortized methods, including  DeepSHAP~\citep{lundberg2017unified}, Saliency~\citep{simonyan2013deep}, Integrated Gradients (IG)~\citep{sundararajan2017axiomatic}, SmoothGrad~\citep{smilkov2017smoothgrad}, and GradCAM~\citep{selvaraju2017grad}. 
More details about the baselines can be found in Appendix~\ref{appendix:baseline_detail}. 

\subsection{Experimental Settings and Evaluation Metrics} 
In this part, we introduce the experimental settings and metrics for evaluating \Algnameabbr{}.
The considered explanation tasks and implementation details are shown as follows. 

\noindent
\textbf{Feature Attribution Task.} This task aims to test the explanation performance on feature attribution. We here implement \Algnameabbr{}-MSE on fine-tuning the explanation head $\boldsymbol{\eta}(\boldsymbol{x}_{i} ~|~ \theta_{\boldsymbol{\eta}})$.
Given $\textsl{g}(\boldsymbol{x}_{i} ~|~ \theta_{\textsl{g}})$, the explanation values are predicted through $[\hat{\phi}_1, \cdots, \hat{\phi}_M] =  \boldsymbol{\eta}( \textsl{g}(\boldsymbol{x}_{i} \mid \theta_{\textsl{g}}) ~|~ \theta_{\boldsymbol{\eta}})$, where $\hat{\phi}_i$ indicates the attribution scores of feature $i$.
Let $[\phi_1, \cdots, \phi_M]$ be the explanation label of an instance $\boldsymbol{x}_i$, \Algnameabbr{}-MSE learns $\boldsymbol{\eta}(\cdot ~|~ \theta_{\boldsymbol{\eta}})$ by minimizing the mean-square loss $\mathcal{L}_{\text{MSE}} \!=\! \frac{1}{M} \sum_{j=1}^M \big( \hat{\phi}_j - \phi_j \big)^2$.
To evaluate the performance, we follow the existing work~\citep{jethani2021fastshap} to estimate the $\boldsymbol\ell_2\text{-error}$ of each instance on the testing set, where the $\boldsymbol\ell_2\text{-error}$ is calculated by $\sqrt{\sum_{j=1}^M \big(\phi_j - \hat{\phi}_j \big)^2}$.

\noindent
\textbf{Feature Importance Ranking Task.} 
This task aims to evaluate the explanation performance on feature ranking index. We here implement \Algnameabbr{}-CE for fine-tuning the explanation head $\boldsymbol{\eta}(\cdot ~|~ \theta_{\boldsymbol{\eta}})$.
Given $\textsl{g}(\boldsymbol{x}_{i} ~|~ \theta_{\textsl{g}})$, the feature ranking index is generated by $[\hat{\mathbf{r}}_1, \cdots, \hat{\mathbf{r}}_M] = \boldsymbol{\eta}(\textsl{g}(\boldsymbol{x}_{i} ~|~ \theta_{\textsl{g}}) ~|~ \theta_{\boldsymbol{\eta}})$ for each instance $\boldsymbol{x}_{i}$. 
The predicted feature ranking index $[\hat{\mathrm{r}}_1, \cdots, \hat{\mathrm{r}}_M]$ is given by $\hat{\mathrm{r}}_j \!=\! \arg \max \hat{\mathbf{r}}_j$ for $1 \!\leq\! j \!\leq\! M$, where $\arg\max (\cdot)$ returns the index with the maximal importance score.
Let $\left[ \mathrm{r}_1, \cdots, \mathrm{r}_M \right]$\footnote{\scriptsize The ranking labels are sorted by the ground-truth explanation scores~(e.g., exact or approximated Shapley values.)} be the ground-truth ranking label, \Algnameabbr{}-CE learns $\boldsymbol{\eta}(\cdot ~|~ \theta_{\boldsymbol{\eta}})$ by minimizing the loss function $\mathcal{L}_{\text{CE}} = \sum_{j=1}^M \mathrm{CrossEntropy}(\hat{\mathbf{r}}_j, \texttt{onehot}(\mathrm{r}_j))$.
To evaluate the feature importance ranking, we follow existing work~\citep{wang2022accelerating}, where the ranking accuracy of each instance is given by $\mathrm{Rank~ACC} \!=\! (\sum_{j=1}^M \frac{\mathbf{1}_{\hat{\mathrm{r}}_j = \mathrm{r}_j}}{j}) / (\sum_{j=1}^M \frac{1}{j})$.

\textbf{Evaluation of Efficiency.}
The algorithmic throughput is exploited to evaluate the speed of explaining process~\citep{wang2022accelerating,teich2018plaster}.
Specifically, the throughput is calculated by $\frac{N_{\text{test}}}{t_{\text{total}}}$, where $N_{\text{test}}$ and $t_{\text{total}}$ denote the testing instance number and the overall time consumption of explanation derivation, respectively. Higher throughput indicates a higher efficiency of the explanation process.
In experiments, $t_{\text{total}}$ is measured based on the physical computing infrastructure given in Appendix~\ref{appendix:infrastructure}.

\textbf{Implementation Details.}
In experiments, \Algnameabbr{} is implemented under three different prediction models $f(\cdot)$. AutoInt~\citep{song2019autoint} is adopted for Census data, MLP model is used for Bankruptcy data, and ResNet-18 is employed for CIFAR-10. As for the ground-truth explanation labels, we calculate the exact Shapley values for Census dataset. For the other two datasets, we utilize the approximated methods to generate explanation labels~\citep{covert2021improving,jethani2021fastshap}, due to the high time complexity of exact Shapley value calculation. The estimated values from Antithetical Permutation Sampling (APS)~\citep{mitchell2021sampling,lomeli2019antithetic} and KS are proved to be infinitely close to the exact Shapley values~\citep{covert2021improving} when they involve sufficient samples. In experiments, we employ APS on Bankruptcy and KS on CIFAR-10 for obtaining explanation labels. More details are introduced in Appendix~\ref{appendix:implement_detail}.

\begin{figure} 
    \centering
    \subfigure[Census Income.]{
    \centering
    \begin{minipage}[t]{0.23\linewidth}
	    \includegraphics[width=0.99\linewidth]{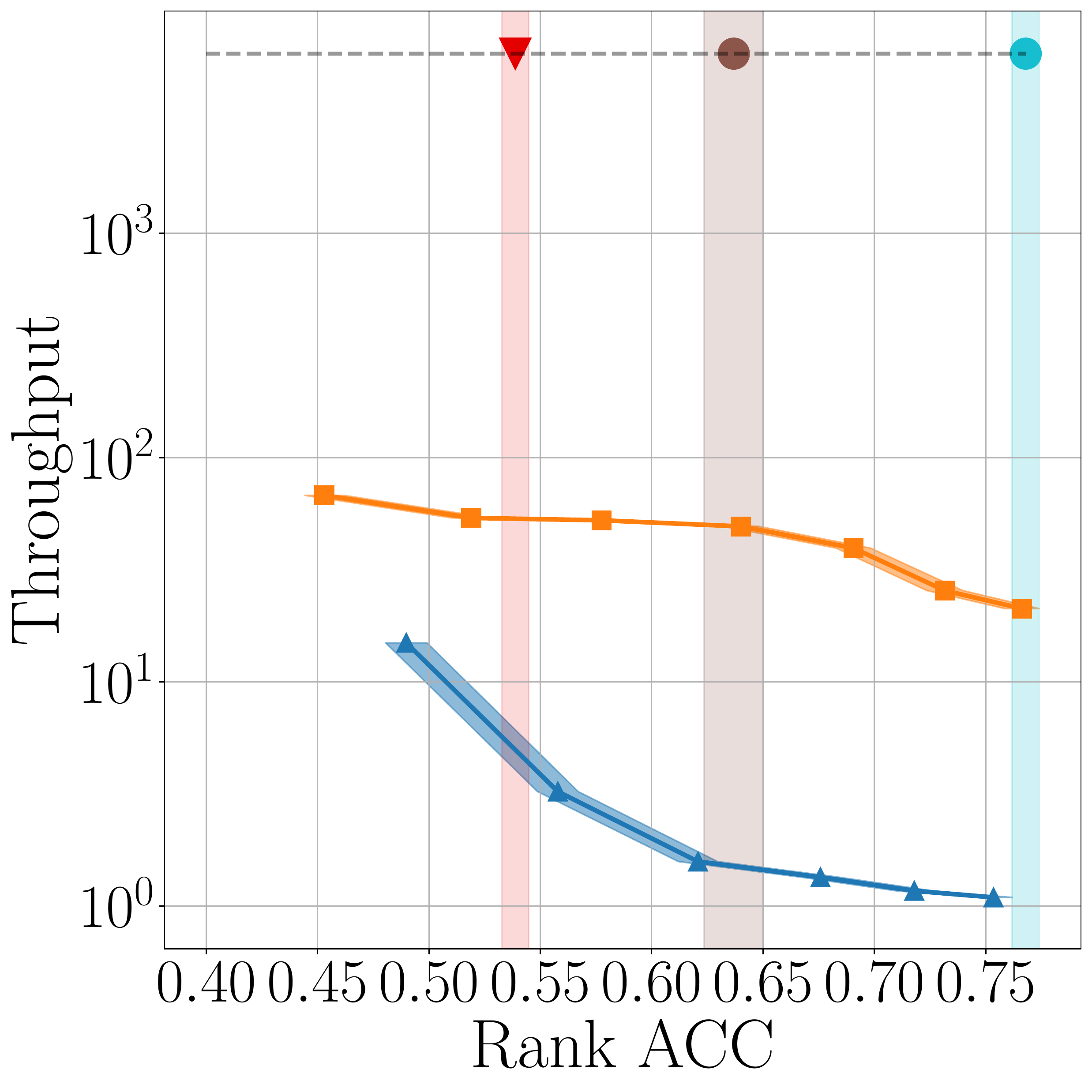}
    \end{minipage}%
    }
    \subfigure[Bankruptcy.]{
    \centering
    \begin{minipage}[t]{0.23\linewidth}
	    \includegraphics[width=0.99\linewidth]{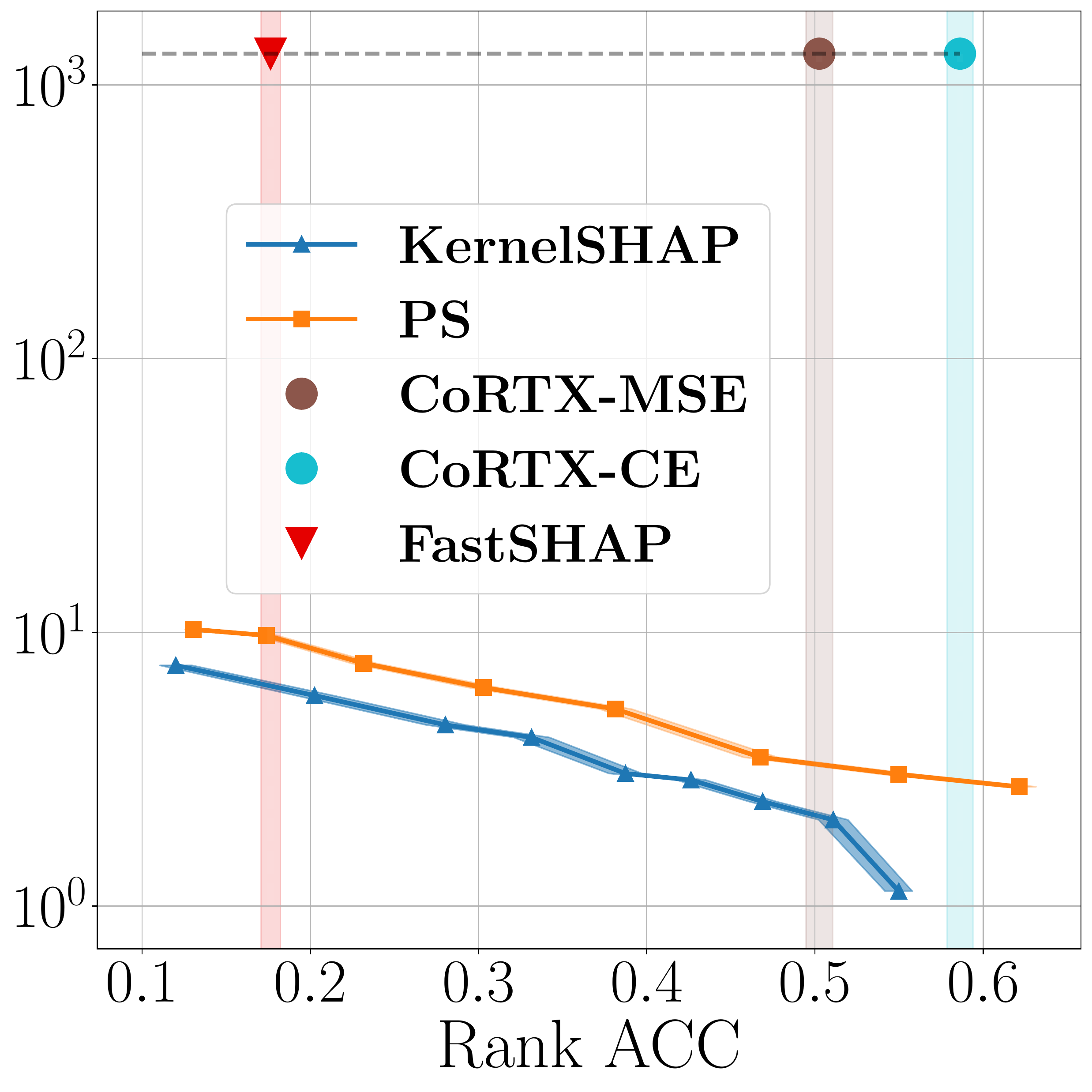}
    \end{minipage}%
    }
    \subfigure[Census Income.]{
    \centering
    \begin{minipage}[t]{0.23\linewidth}
	    \includegraphics[width=0.99\linewidth]{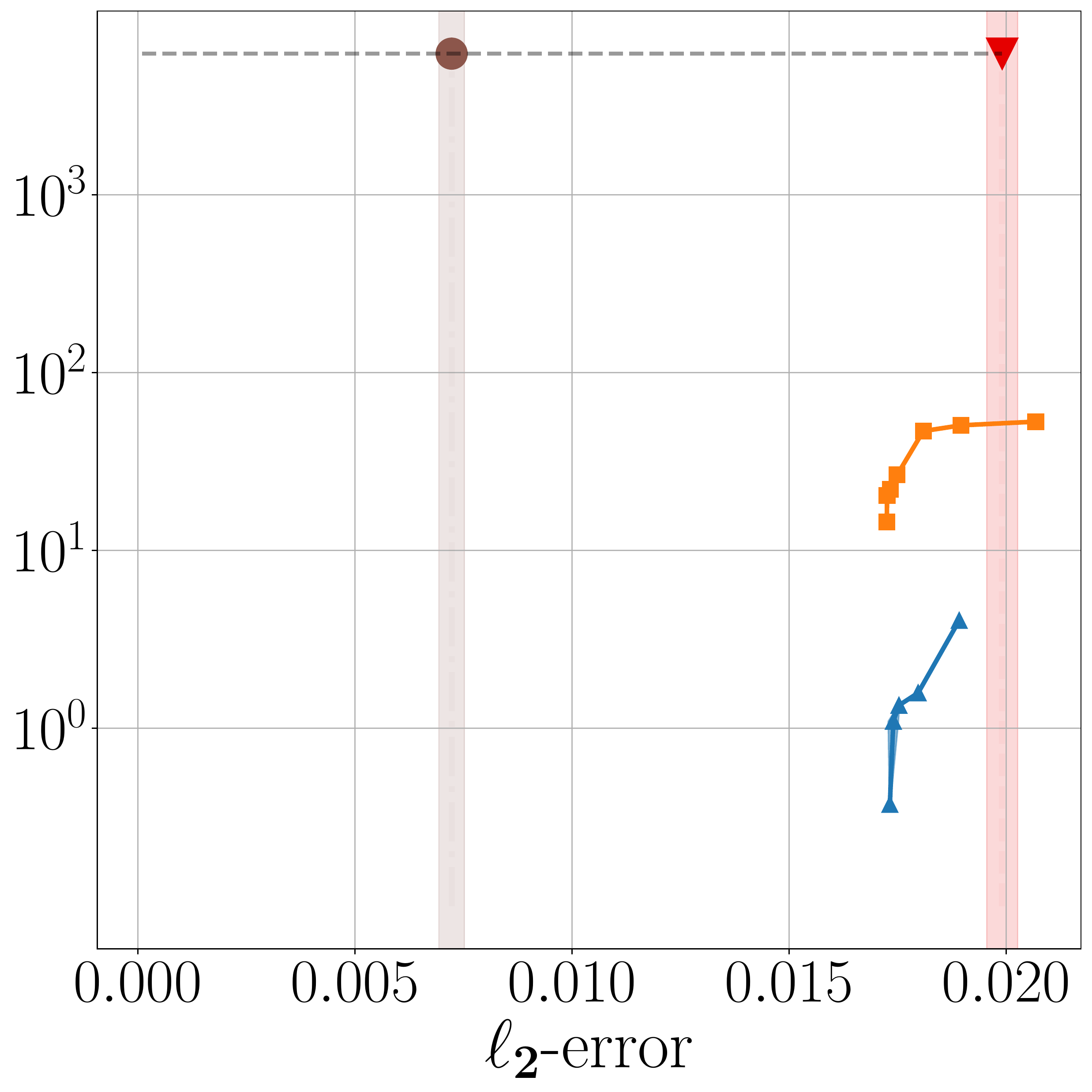}
    \end{minipage}%
    }
    \subfigure[Bankruptcy.]{
    \centering
    \begin{minipage}[t]{0.23\linewidth}
	    \includegraphics[width=0.99\linewidth]{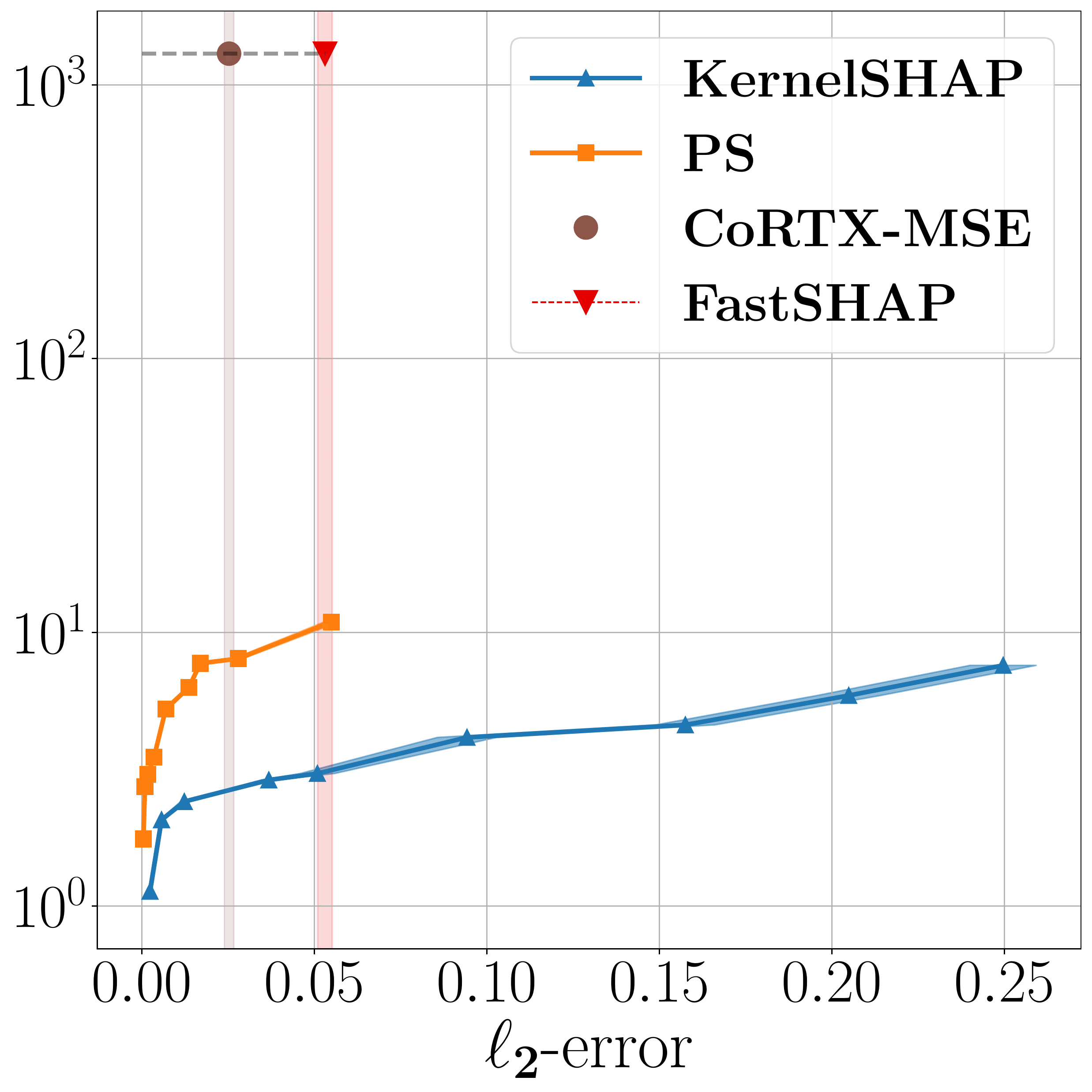}
    \end{minipage}%
    }
    \vspace{-4mm}
    \caption{ Explanation throughput versus ranking accuracy on Census Income~(a) and Bankruptcy dataset~(b). Explanation throughput versus $\boldsymbol\ell_2\text{-error}$ on Census Income~(c) and Bankruptcy dataset~(d).}
    \label{fig:intep_time}
    \vspace{-3mm}
\end{figure}

\begin{figure}[t]
\setlength{\abovecaptionskip}{0mm}
\setlength{\belowcaptionskip}{-5mm}
    \centering
    \subfigure[Census Income.]{
    \centering
    \begin{minipage}[t]{0.23\linewidth}
	\includegraphics[width=1.0\linewidth]{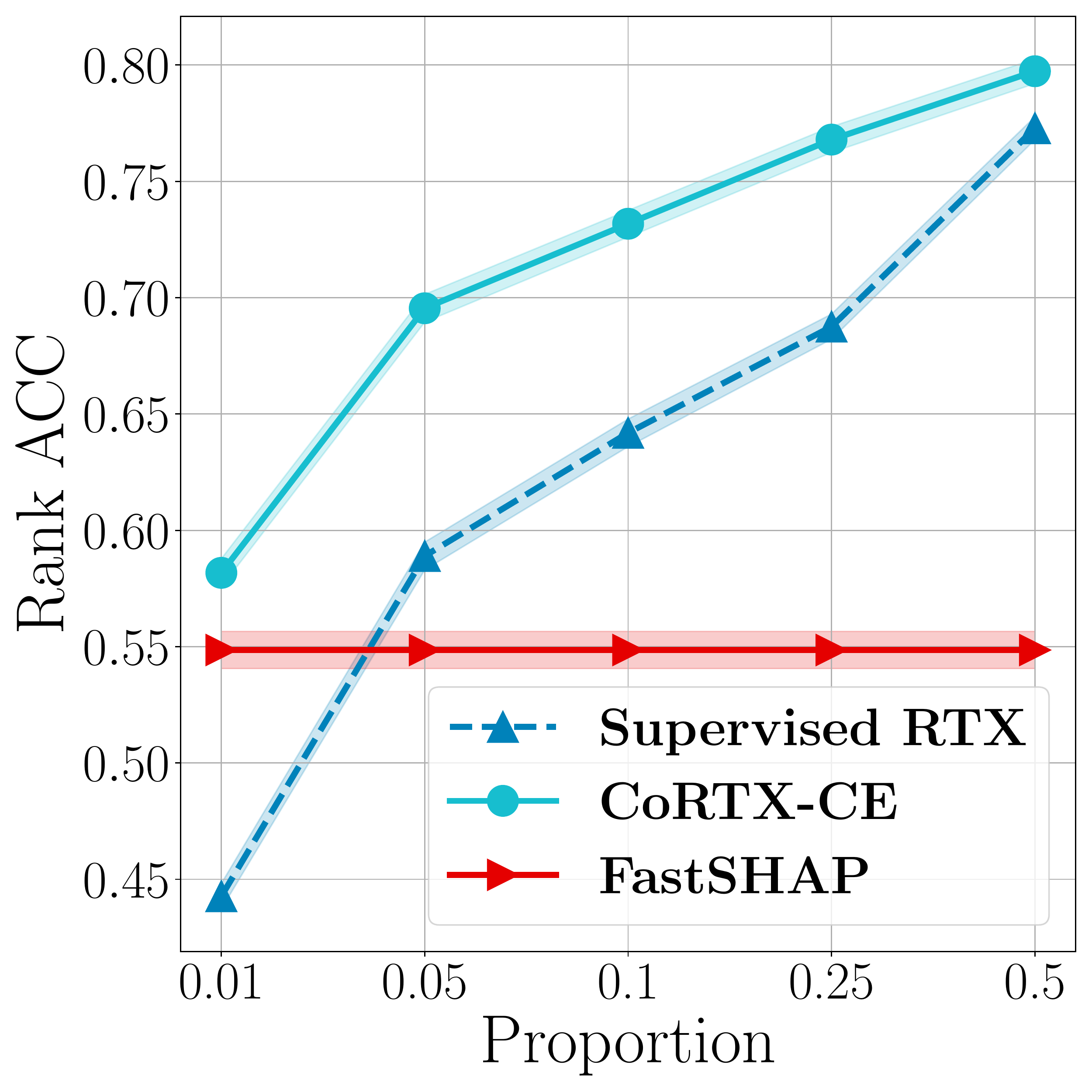}
    \end{minipage}%
    }
    \subfigure[Census Income.]{
    \centering
    \begin{minipage}[t]{0.23\linewidth}
	\includegraphics[width=1.0\linewidth]{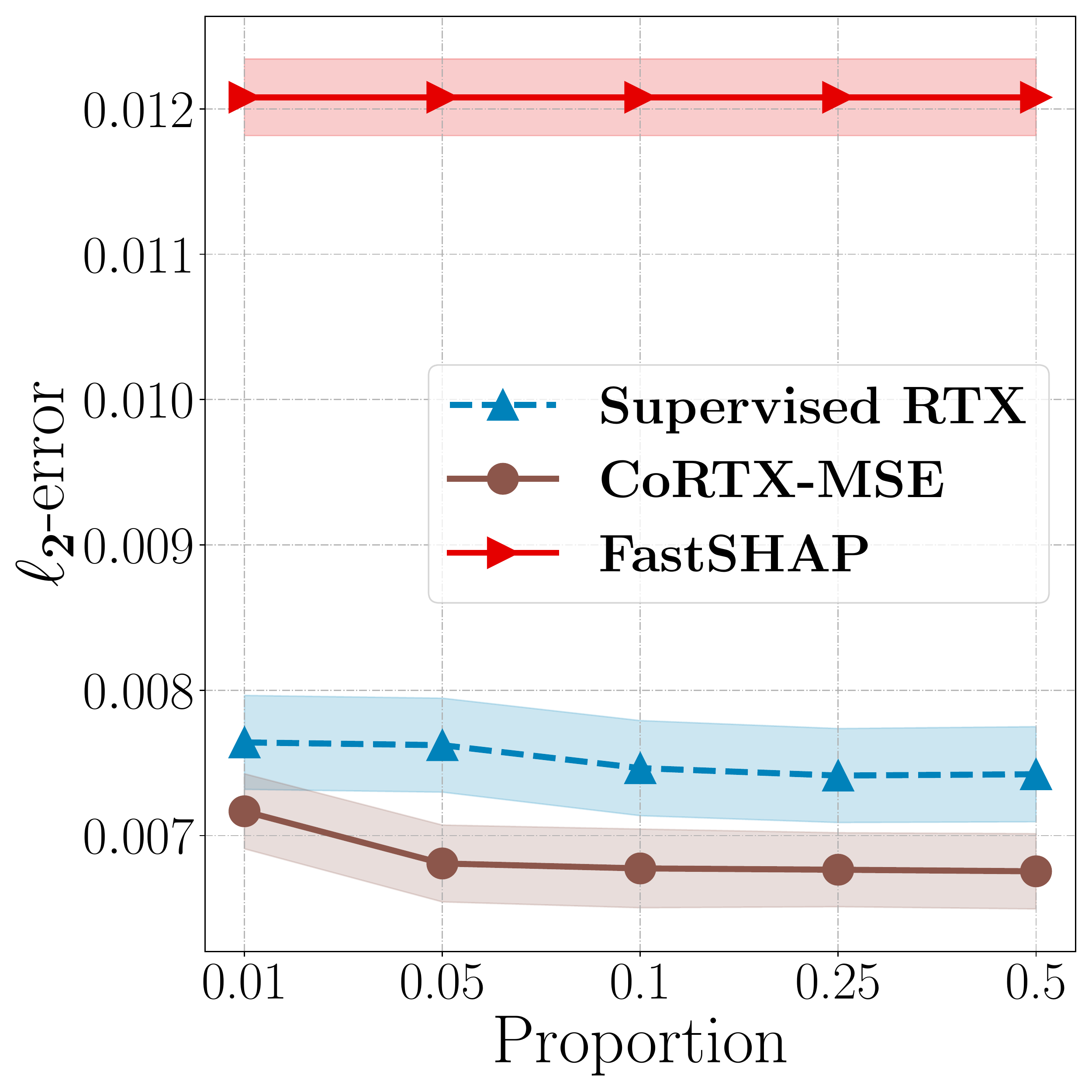}
    \end{minipage}%
    }
    \subfigure[Bankruptcy.]{
    \centering
    \begin{minipage}[t]{0.23\linewidth}
	\includegraphics[width=1.0\linewidth]{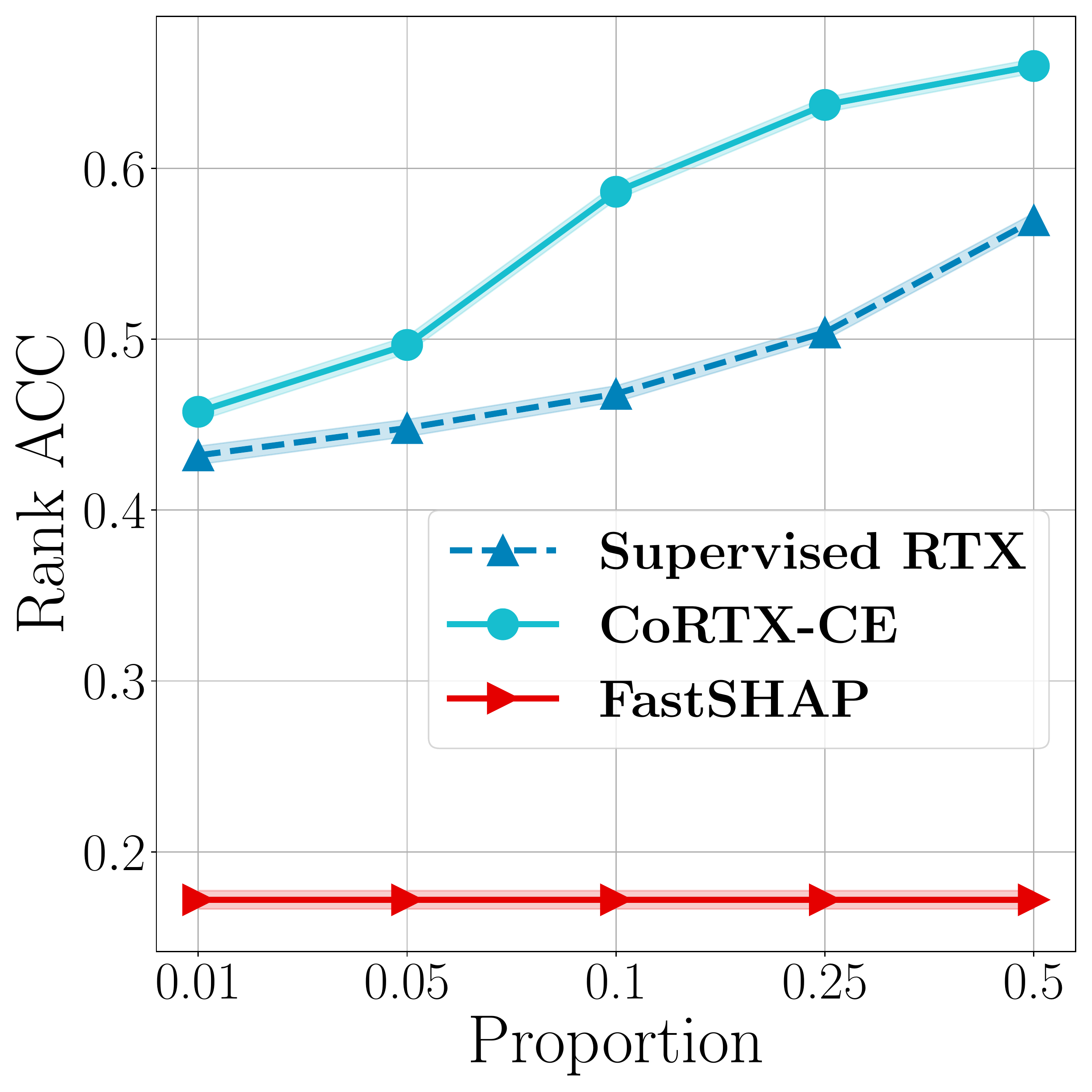}
    \end{minipage}%
    }
    \subfigure[Bankruptcy.]{
    \centering
    \begin{minipage}[t]{0.23\linewidth}
	\includegraphics[width=1.0\linewidth]{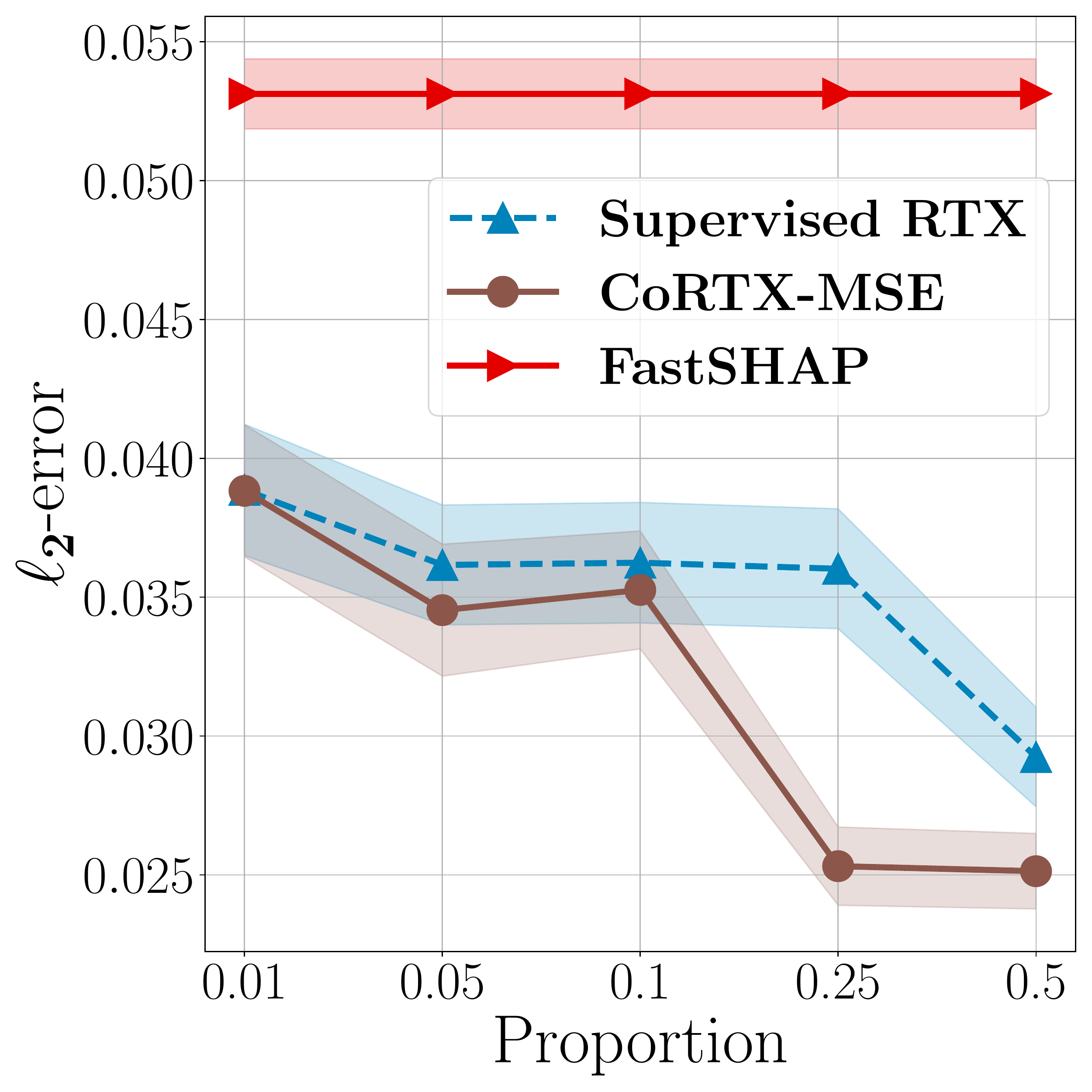}
    \end{minipage}%
    }
    \caption{Explanation performance with different proportions of explanation label usage on two tabular datasets. \Algnameabbr{} outperforms SOTA RTX baselines by using only 5\% of labels.}
    \vspace{1mm}
    \label{fig:ann_ratio}
    \vspace{-1mm}
\end{figure}

\vspace{-2mm}
\subsection{Tabular Experiments}
\subsubsection{Explanation Efficacy and Efficiency~(RQ1)}
We compare \Algnameabbr{} with the baseline methods on deriving model explanations.
The results in feature attribution and feature ranking are shown by $\boldsymbol\ell_2\text{-error}$ versus throughput and  $\mathrm{Rank~ACC}$ versus throughput, respectively.
The reported results of \Algnameabbr{} adopts 25\% of the explanation labels on fine-tuning the explanation head.
The performance comparisons are illustrated in Figures~\ref{fig:intep_time}, where (a), (b) demonstrate the performance on feature ranking and (c), (d) show the performance on feature attribution.
According to the experimental results, we have the following observations:
\begin{itemize}[leftmargin=*]
    \itemsep=0pt
    
    \item \textbf{\Algnameabbr{} vs. Non-Amortized Methods}: 
    Compared with the non-amortized explanation methods with large times of model evaluations, \Algnameabbr{} achieves competitive explanation performance. In Census data, we observe \Algnameabbr{} is competitive to PS and KS with respectively $2^{9}$ and $2^{10}$ of model evaluations. For KS and PS, it is noted that there is a significant decrease in $\mathrm{Rank~ACC}$ and an obvious increase in $\boldsymbol\ell_2\text{-error}$ as the throughput grows. This indicates that KS and PS both suffer from an undesirable trade-off between the explanation speed and performance. In contrast, the proposed \Algnameabbr{} achieves both efficient and effective explanations for the target model.
    
    \item \textbf{\Algnameabbr{} vs. FastSHAP}: \Algnameabbr{} outperforms FastSHAP on $\mathrm{Rank~ACC}$ and $\boldsymbol\ell_2\text{-error}$ under the same level of throughput. This observation shows that \Algnameabbr{} provides more effective explanation under the scenario of RTX.
    
    \item \textbf{\Algnameabbr{}-MSE vs. \Algnameabbr{}-CE}: \Algnameabbr{} consistently provides effective solutions for the two explanation tasks.
    \Algnameabbr{}-MSE is competitive on the feature attribution and ranking task, while \Algnameabbr{}-CE performs even better than \Algnameabbr{}-MSE in the ranking task.
    The superiority of \Algnameabbr{}-CE on ranking task largely results from the loss function in fine-tuning, where appropriate tuning loss can help enhance the explanation quality for certain tasks.  
\end{itemize}

\subsubsection{Contributions on latent explanation~(RQ2)}

In this experiment, we study the effectiveness of latent explanation in deriving explanation results.
Given the obtained latent explanation, Figure~\ref{fig:ann_ratio} demonstrates the explanation performance of the RTX frameworks with different label usage ratios. We compare \Algnameabbr{} with FastSHAP and Supervised RTX and summarize the key observations as below:

\begin{itemize}[leftmargin=*, topsep=-1mm]
\setlength{\parskip}{0mm}
\setlength{\parsep}{0mm}
\setlength{\itemsep}{0mm}

    \item \textbf{Effectiveness of Encoding}: \Algnameabbr{}-MSE and \Algnameabbr{}-CE consistently outperform the Supervised RTX on different label proportions. 
    The results on two explanation tasks show that \Algnameabbr{} provides effective latent explanation $\boldsymbol{h}_{i}$ in fine-tuning the head $\boldsymbol{\eta}(\boldsymbol{h}_{i} ~|~ \theta_{\boldsymbol{\eta}})$.

    \item \textbf{Sparsity of Labels}: \Algnameabbr{}-MSE and \Algnameabbr{}-CE can provide effective explanation outcomes when explanation labels are limited. 
    The results indicate that \Algnameabbr{} can be potentially applied to large-scale datasets, where the explanation labels are computationally expensive to obtain. 
    
    \item \textbf{Effectiveness of Labels}: \Algnameabbr{} and Supervised RTX consistently outperform FastSHAP when having more than $5\%$ explanation labels. FastSHAP synchronously generates the explanation labels with low-sampling times during the training.
    In contrast, \Algnameabbr{} and Supervised RTX are trained with high-quality labels, which are the exact Shapley values or the approximated ones with high-sampling times. The results indicate that high-quality explanation labels can significantly benefit the learning process of the explainer, even when exploiting only $5\%$ explanation labels.
    
\end{itemize}

\begin{figure}[t]
    \centering
    \includegraphics[width=0.95\textwidth]{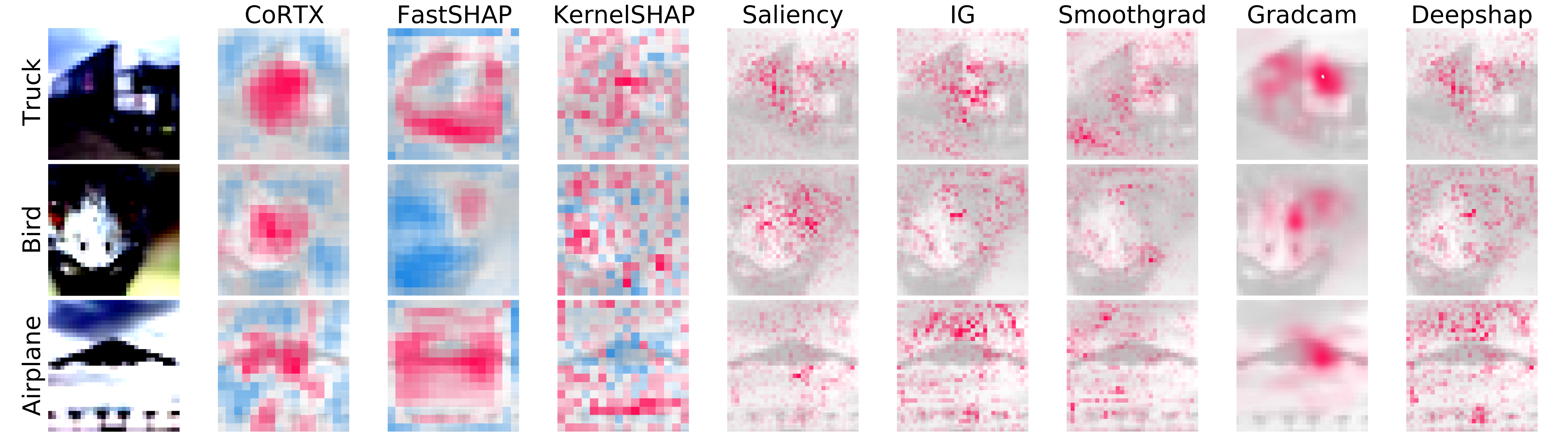} 
    \vspace{-2mm}
    \caption{Explanations generated on CIFAR-10 Dataset.}
    \label{fig:cifardemo} 
    \vspace{-5mm}
\end{figure}

\subsubsection{Ablation Studies on Synthetic Positive Augmentations~(RQ3)}
\vspace{-2mm}

\begin{wrapfigure}{R}{14em}
  \centering
  \vspace{-4mm}
  \includegraphics[width=42mm]{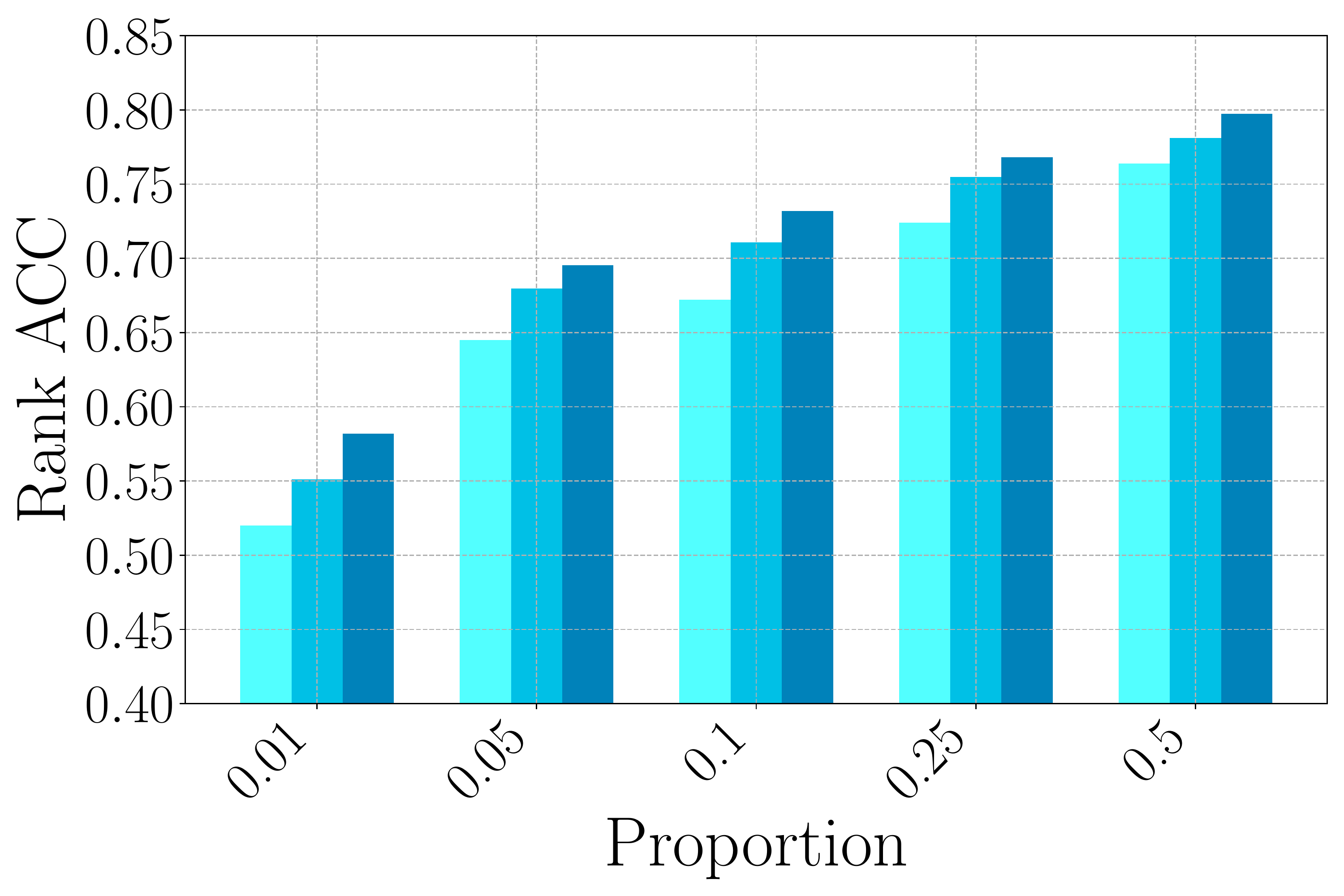} 
  \smallskip\par
  \includegraphics[width=42mm]{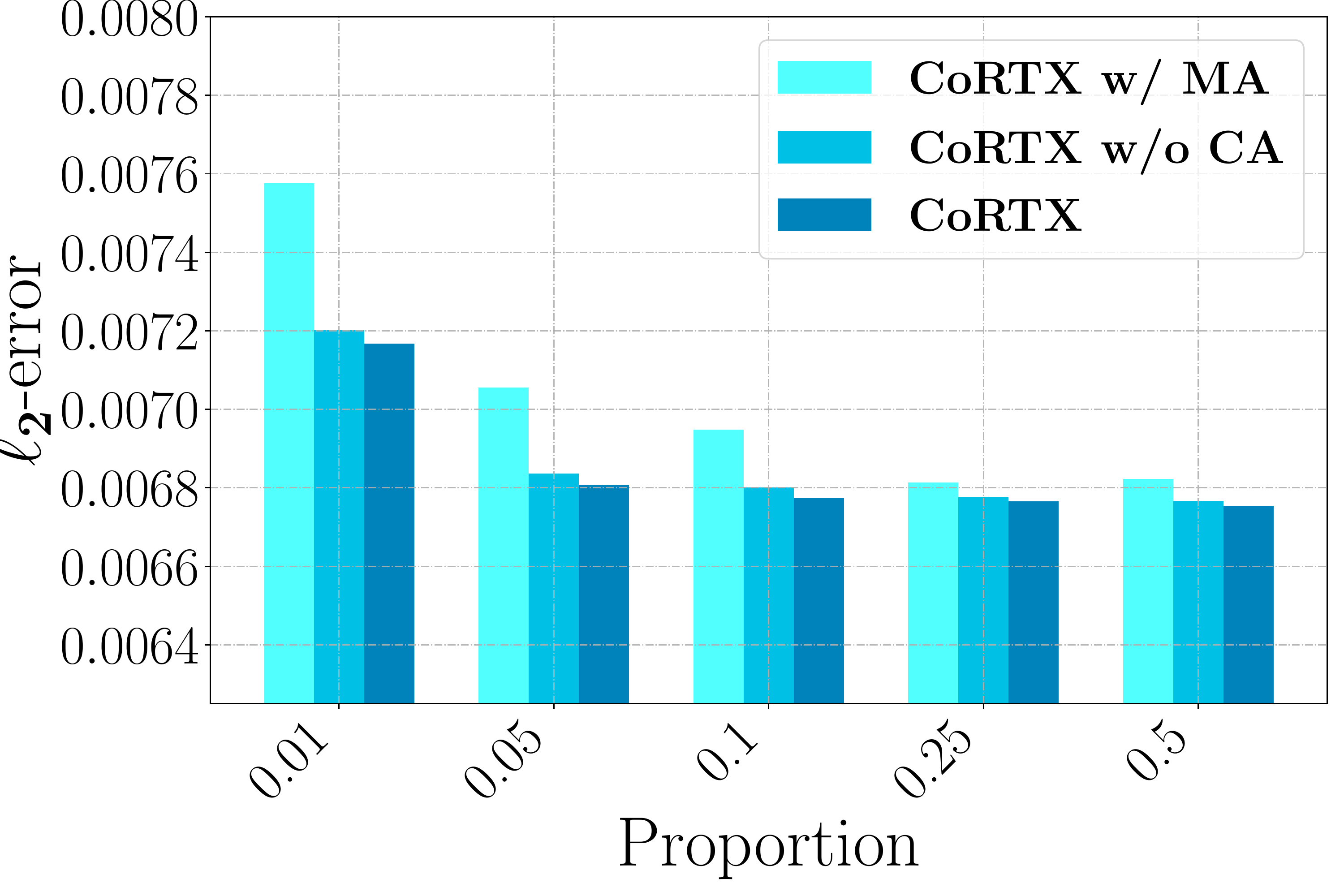}
  \vspace{-0.2cm}
  \caption{Ablation Study on Positive Augmentations.}
  \label{fig:ablpos}
  \vspace{-0.5cm}
\end{wrapfigure}
We further conduct the ablation studies on synthetic positive augmentation in Section~\ref{sec:posaug}, aiming to prove its validity in learning latent explanation. Since negative pairs are randomly selected in \Algnameabbr{}, we here mainly focus on the positive augmentation in our studies.
The ablation studies are conducted under both attribution and ranking tasks on Census dataset. In the experiments, \Algnameabbr{} is compared with two methods: \textsl{\Algnameabbr{} w/o Compact Alignment (CA)} and \textsl{\Algnameabbr{} w/ Maximum Alignment (MA)}. To be concrete, \textsl{\Algnameabbr{} w/o CA} replaces Equation~\ref{eq:compactpositive} with the existing settings of contrastive learning, which is a random masking strategy.
\textsl{\Algnameabbr{} w/ MA} replaces the minimum selecting strategy in Equation~\ref{eq:compactpositive} with the maximum one on synthetic instance set, which is randomly masked from the anchor instance. The remaining settings follow the same as \Algnameabbr{}.
Figure~\ref{fig:ablpos} demonstrates the comparisons among \textsl{\Algnameabbr{} w/o CA}, \textsl{\Algnameabbr{} w/ MA} and \Algnameabbr{}. We can observe that \Algnameabbr{} outperforms \textsl{\Algnameabbr{} w/o CA} and \textsl{\Algnameabbr{} w/ MA} under different proportions of explanation label usage. This demonstrates that the designed synthetic positive augmentation significantly benefits the efficacy of explanation results.

\subsection{Image Experiments~(RQ1)}
\vspace{-2mm}
Unlike the tabular cases, image data is typically composed of high-dimension pixel features. To further test the efficacy of latent explanation, we evaluate \Algnameabbr{} on CIFAR-10 dataset. We compare \Algnameabbr{} with FastSHAP and six non-amortized image attribution methods. 
The explanation results of \Algnameabbr{} are fine-tuned using 5\% of ground-truth explanation labels. \Algnameabbr{} and FastSHAP output $2 \!\times\! 2$ super-pixel attributions~\citep{jethani2021fastshap} for explaining the prediction of image classification. More details are provided in Appendix~\ref{appendix:implement_detail}. 

\subsubsection{A Case Study of \Algnameabbr{}}
\vspace{-2mm}
We visualize the explanation results in Figure~\ref{fig:cifardemo}. It shows that \Algnameabbr{}, FastSHAP, and GradCAM are able to highlight the relevant objects corresponding to the image attribution labels. 
Specifically, \Algnameabbr{} provides better human-understandable explanation results.
In contrast, we observe that the important regions localized by FastSHAP are larger than relevant objects.
Moreover, GradCAM highlights the regions only on tiny relevant objects. This makes FastSHAP and GradCAM less human-understandable for the explanation.
These observations validate that the explanations from \Algnameabbr{} are more human-understandable.
More case studies are available in Appendix~\ref{appendix:addresult}. 

\subsubsection{Quantitative Evaluation}\label{sec:quant}
\begin{table}
	\begin{minipage}{0.65\linewidth}
		\centering
		\fontsize{6.8pt}{10.25pt}\selectfont
		\begin{tabular}{l | cc | cc}
            \toprule
            & \multicolumn{2}{c}{Top-1 Accuracy} 
            |& \multicolumn{2}{c}{$\mathrm{Log}$-$\mathrm{odds}$} \\
            \cmidrule(lr){2-3}\cmidrule(lr){4-5}
            & Exclusion & Inclusion & Exclusion & Inclusion \\
            \midrule 
            \Algnameabbr{} & \textbf{0.373} $\pm$ 0.011 & \textbf{0.774} $\pm$ 0.012 & \textbf{5.172} $\pm$ 0.010 & \textbf{4.863} $\pm$ 0.028 \\
            FastSHAP & 0.420 $\pm$ 0.015 & \textbf{0.782} $\pm$ 0.015 & \textbf{5.010} $\pm$ 0.012 & 4.817 $\pm$ 0.026 \\
            KernelSHAP & 0.395 $\pm$ 0.012 & 0.769 $\pm$ 0.012 & 5.517 $\pm$ 0.009 & 4.829 $\pm$ 0.024 \\
            Saliency & 0.499 $\pm$ 0.012 & 0.735 $\pm$ 0.013 & 5.869 $\pm$ 0.010 & 4.415 $\pm$ 0.025 \\
            IG & 0.559 $\pm$ 0.012 & 0.740 $\pm$ 0.012 & 6.159 $\pm$ 0.011 & 4.439 $\pm$ 0.025 \\
            Smoothgrad & 0.471 $\pm$ 0.011 & 0.738 $\pm$ 0.012 & 5.679 $\pm$ 0.008 & 4.370 $\pm$ 0.026 \\
            Gradcam & 0.563 $\pm$ 0.012 & 0.741 $\pm$ 0.012 & 6.109 $\pm$ 0.011 & 4.446 $\pm$ 0.024 \\
            Deepshap & 0.554 $\pm$ 0.012 & 0.742 $\pm$ 0.012 & 6.124 $\pm$ 0.009 & 4.441 $\pm$ 0.023 \\
            \bottomrule
        \end{tabular}
        \vspace{-2mm}
        \caption{Exclusion and Inclusion AUCs. The evaluation scores are calculated from the bootstrapping average scores of 20 times repetitions. The explanation methods perform better when obtaining lower Exclusion AUC and encountering higher Inclusion AUC on both Top-1 Accuracy and $\mathrm{Log}$-$\mathrm{odds}$.}
        \label{table:exin}
	\end{minipage}\hfill
	\begin{minipage}{0.35\linewidth}
		\centering
		\vspace{-1.2pt}
		\includegraphics[width=37mm]{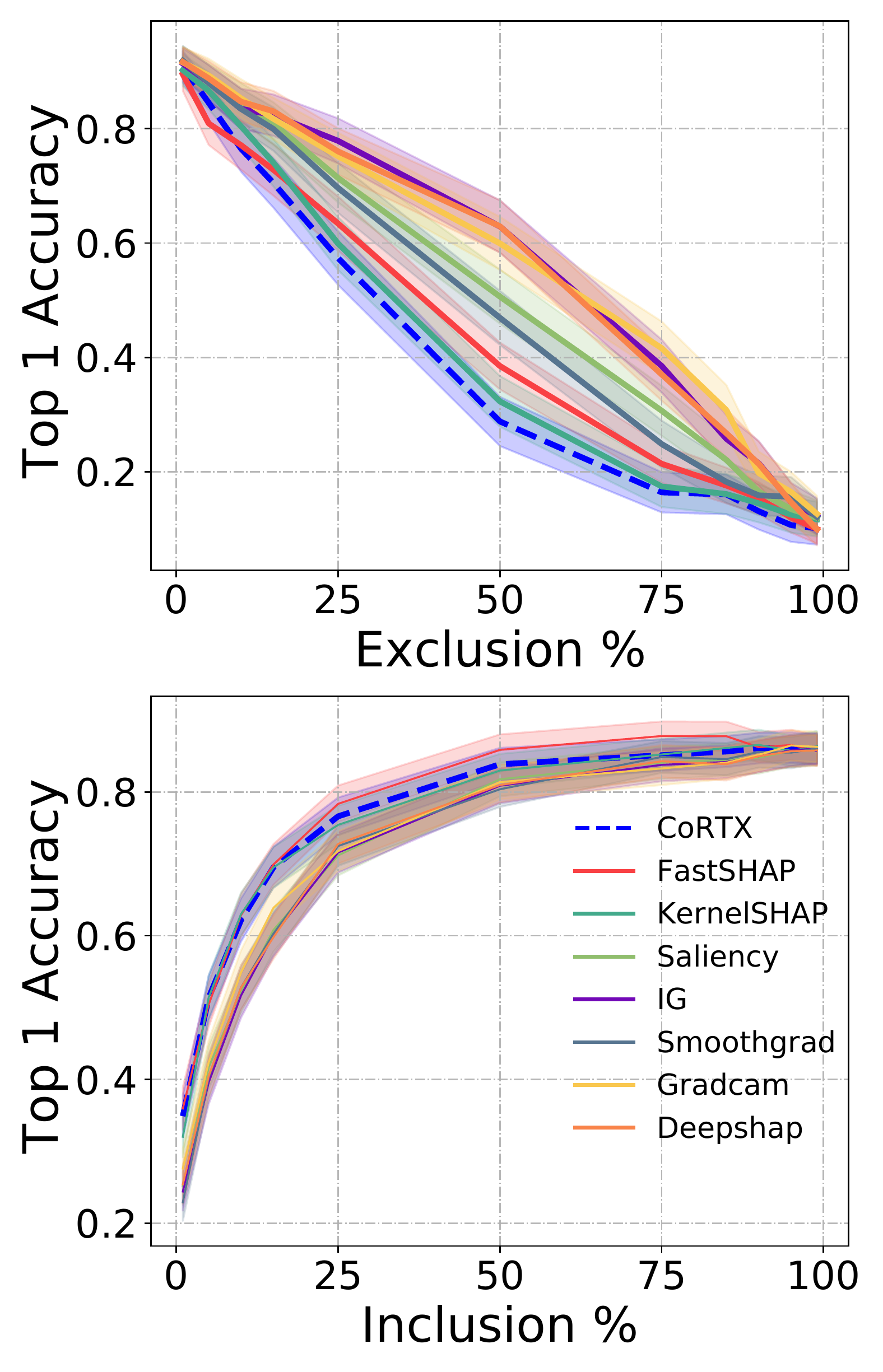}
		\vspace{-3mm}
		\captionof{figure}{AUC on CIFAR-10}
		\label{fig:exin}
	\end{minipage}
	\vspace{-4mm}
\end{table}

To quantitatively investigate the explanation results on CIFAR-10, we test \Algnameabbr{} with the exclusion and inclusion comparing with FastSHAP as well as Shapley-based and gradient-based non-amortized methods. Following the experimental settings from existing work~\citep{petsiuk2018rise,jethani2021fastshap}, we utilize the AUC of Top-1 accuracy and the AUC of $\mathrm{Log}$-$\mathrm{odds}$
as the metrics for evaluation~\citep{jethani2021fastshap}. Specifically, the testing images are orderly masked according to the estimated feature importance scores. Once the important pixels are removed from the input images, Top-1 accuracy and $\mathrm{Log}$-$\mathrm{odds}$ are expected to drop drastically, obtaining a low exclusion AUC and a high inclusion AUC. More details about $\mathrm{Log}$-$\mathrm{odds}$ are given in Appendix~\ref{appendix:log-odds}.
Figure 6 and Table~\ref{table:exin} demonstrate the results of the exclusion and inclusion under 1000 images.
We observe that \Algnameabbr{} outperforms all other baselines in two evaluations, which obtains the lowest exclusion AUC of Top-1 Accuracy and the lowest inclusion AUC of $\mathrm{Log}$-$\mathrm{odds}$. \Algnameabbr{} is also competitive with the state-of-the-art baseline, FastSHAP, on the other two evaluations. Besides, \Algnameabbr{} performs better than non-amortized baseline methods in all four evaluations.

\section{Conclusions}
\vspace{-2mm}
In this paper, we propose a real-time explanation framework, \Algnameabbr{}, based on contrastive learning techniques. Specifically, \Algnameabbr{} contains two components, i.e., explanation encoder and explanation head, which learns the latent explanation and then provides explanation results for certain explanation tasks. \Algnameabbr{} introduces an explanation-oriented data augmentation strategy for learning the latent explanation. An explanation head can be further fine-tuned with a small amount of explanation labels. Theoretical analysis indicates our proposed data augmentation and contrastive scheme can effectively bind the explanation error. The experimental results on three datasets demonstrate that \Algnameabbr{} outperforms the state-of-the-art baselines regarding the explanation efficacy and efficiency.
As for future directions, we consider extending CoRTX to multiple domains to obtain the general pre-trained latent explanation for downstream scenarios. It is also promising to explore healthcare applications with CoRTX framework, where the explanation labels require intensive human knowledge.

\section{Reproducibility Statement}
To ensure the reproducibility of our experiments and benefit the research community, we provide the source code of CoRTX along with the publicly available dataset. The detailed experiment settings, including hyper-parameters, baselines, and datasets, are documented in Appendix~\ref{appendix:dataset_detail} to Appendix~\ref{appendix:implement_detail}. Our source code is accessible at: \href{https://github.com/ynchuang/CoRTX-720}{ \texttt{https://github.com/ynchuang/CoRTX-720}}.

\bibliographystyle{iclr2022_conference}
\bibliography{paper} 

\newpage
\appendix

\section*{Appendix}

\section{Proof of Theorem~\ref{thm:1}}
\label{appendix:proof_thm1}

We prove Theorem~\ref{thm:1} in this section.
\setcounter{theorem}{0}
\renewcommand{\thelemma}{\Alph{section}\arabic{theorem}}

\begin{theorem}[\textbf{Compact Alignment}]
    Let $f(\boldsymbol{x})$ be a $K_{f}$-Lipschitz~\footnote{Lipschitz continuity can be refered to \citep{virmaux2018lipschitz}.} continuous function and $\boldsymbol{\Phi}(\boldsymbol{x}) = [\phi_1(\boldsymbol{x}), \cdots, \phi_M(\boldsymbol{x})]$ be the feature importance scores, where $\phi_i(\boldsymbol{x}) \!:=\! \mathbb{E}_{\mathcal{S} \subseteq \mathbfcal{U} \setminus \{ i \}}\big[
    f(\widetilde{\boldsymbol{x}}_{\mathcal{S} \cup \{i\}}) - f(\widetilde{\boldsymbol{x}}_{\mathcal{S}}) \big]$. 
    Given a perturbed positive instance $\widetilde{\boldsymbol{x}} \!\in\! \mathcal{X}^+$ satisfying $\min_{1\leq i \leq M} \phi_i(\widetilde{\boldsymbol{x}})~\!\!\geq 0$, the explanation difference $\lVert \boldsymbol{\Phi}(\boldsymbol{x}) -  \boldsymbol{\Phi}(\widetilde{\boldsymbol{x}}) \rVert_2$ is bounded by the prediction difference $| f(\boldsymbol{x})- f(\widetilde{\boldsymbol{x}}) |$ as
    \begin{equation}
    \setlength{\abovedisplayskip}{1mm}
    \setlength{\belowdisplayskip}{2mm}
        \lVert \boldsymbol{\Phi}(\boldsymbol{x}) -  \boldsymbol{\Phi}(\widetilde{\boldsymbol{x}}) \rVert_2 \leq (1 + \sqrt{2}\gamma_0)| f(\boldsymbol{x}) - f(\widetilde{\boldsymbol{x}})| + \sqrt{M}\gamma_0,
    \end{equation}
    where $\gamma_0 = K_f \lVert \boldsymbol{x} \rVert_2$ and $K_f \geq 0$ is the Lipschitz constant of function $f(\cdot)$.
\end{theorem}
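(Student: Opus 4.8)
The plan is to split the explanation gap via the triangle inequality and bound the two pieces by different mechanisms: the gradient-type (Lipschitz) control gives a uniform coordinatewise bound on $\boldsymbol{\Phi}(\boldsymbol{x})$, while the hypothesis $\min_i\phi_i(\widetilde{\boldsymbol{x}})\ge 0$ lets the Euclidean norm of $\boldsymbol{\Phi}(\widetilde{\boldsymbol{x}})$ collapse to a \emph{scalar} prediction gap through the additivity (efficiency) of the attribution functional. Writing $\delta:=|f(\boldsymbol{x})-f(\widetilde{\boldsymbol{x}})|$, the target is then assembled from $\sqrt{M}\gamma_0$, $\delta$, and a cross term $\gamma_0\delta$.

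First I would use $\lVert\boldsymbol{\Phi}(\boldsymbol{x})-\boldsymbol{\Phi}(\widetilde{\boldsymbol{x}})\rVert_2\le\lVert\boldsymbol{\Phi}(\boldsymbol{x})\rVert_2+\lVert\boldsymbol{\Phi}(\widetilde{\boldsymbol{x}})\rVert_2$. For the first summand, note that in each term $f(\widetilde{\boldsymbol{x}}_{\mathcal{S}\cup\{i\}})-f(\widetilde{\boldsymbol{x}}_{\mathcal{S}})$ the two perturbed inputs agree in every coordinate except $i$, where they carry $x_i$ and $(x_r)_i$; hence $K_f$-Lipschitzness gives $|f(\widetilde{\boldsymbol{x}}_{\mathcal{S}\cup\{i\}})-f(\widetilde{\boldsymbol{x}}_{\mathcal{S}})|\le K_f|x_i-(x_r)_i|\le K_f\lVert\boldsymbol{x}\rVert_2=\gamma_0$ (bounding the reference-shifted displacement by $\lVert\boldsymbol{x}\rVert_2$, as in the paper's definition of $\gamma_0$). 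Averaging over $\mathcal{S}$ gives $|\phi_i(\boldsymbol{x})|\le\gamma_0$ for every $i$, so $\lVert\boldsymbol{\Phi}(\boldsymbol{x})\rVert_2\le\sqrt{M}\gamma_0$.

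Next, for $\lVert\boldsymbol{\Phi}(\widetilde{\boldsymbol{x}})\rVert_2$ I would invoke $\min_i\phi_i(\widetilde{\boldsymbol{x}})\ge0$: since all coordinates are non-negative, $\lVert\boldsymbol{\Phi}(\widetilde{\boldsymbol{x}})\rVert_2\le\lVert\boldsymbol{\Phi}(\widetilde{\boldsymbol{x}})\rVert_1=\sum_{i=1}^M\phi_i(\widetilde{\boldsymbol{x}})$, and by the efficiency property of the attribution (the marginal contributions telescope over the whole feature set, carrying the input from $\boldsymbol{x}_r$ up to $\widetilde{\boldsymbol{x}}$) this sum equals $f(\widetilde{\boldsymbol{x}})-f(\boldsymbol{x}_r)$. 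I would then split $f(\widetilde{\boldsymbol{x}})-f(\boldsymbol{x}_r)=\big(f(\widetilde{\boldsymbol{x}})-f(\boldsymbol{x})\big)+\big(f(\boldsymbol{x})-f(\boldsymbol{x}_r)\big)$ and bound the second piece by $K_f\lVert\boldsymbol{x}\rVert_2=\gamma_0$, leaving $\delta+\gamma_0$. Feeding this back and regrouping the $\gamma_0$'s against $\delta$ — here is where a quadratic/Cauchy--Schwarz step (expanding $\lVert\boldsymbol{\Phi}(\boldsymbol{x})-\boldsymbol{\Phi}(\widetilde{\boldsymbol{x}})\rVert_2^2$ and controlling the inner product $\langle\boldsymbol{\Phi}(\boldsymbol{x}),\boldsymbol{\Phi}(\widetilde{\boldsymbol{x}})\rangle$ using the non-negativity of $\boldsymbol{\Phi}(\widetilde{\boldsymbol{x}})$ and $|\phi_i(\boldsymbol{x})|\le\gamma_0$) produces the cross term $\gamma_0\delta$ and the $\sqrt{2}$ — yields the bound in~\eqref{eq:thm1:func}.

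The main obstacle is step three: converting the $\ell_2$ geometry of the attribution difference into the scalar $|f(\boldsymbol{x})-f(\widetilde{\boldsymbol{x}})|$. The non-negativity assumption on $\widetilde{\boldsymbol{x}}$ is exactly what makes this possible — it is what turns an $\ell_2$ norm into an $\ell_1$ norm, hence into the efficiency value — and one must be careful that the telescoping identity $\sum_i\phi_i(\widetilde{\boldsymbol{x}})=f(\widetilde{\boldsymbol{x}})-f(\boldsymbol{x}_r)$ is applied with the correct (efficiency-respecting) weighting over subsets. A secondary, purely bookkeeping, difficulty is keeping the $\gamma_0$ estimates tight enough — via the quadratic expansion rather than a crude additive split — to land precisely on the coefficients $1+\sqrt{2}\gamma_0$ and $\sqrt{M}\gamma_0$.
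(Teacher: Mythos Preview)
Your decomposition diverges from the paper's at the first move. You split via the triangle inequality into $\lVert\boldsymbol{\Phi}(\boldsymbol{x})\rVert_2+\lVert\boldsymbol{\Phi}(\widetilde{\boldsymbol{x}})\rVert_2$ and bound each piece globally; the paper instead works \emph{coordinatewise on the difference}. After Jensen and a regrouping it obtains
\[
|\phi_i(\boldsymbol{x})-\phi_i(\widetilde{\boldsymbol{x}})|\;\le\;\delta_i+\gamma_0,
\qquad
\delta_i:=\tfrac{1}{2^{M-1}}\sum_{\mathbf{S}}\bigl|f\bigl((\mathbf{S}\cup[1]_i)\odot\boldsymbol{x}+\cdots\bigr)-f\bigl((\mathbf{S}\cup[1]_i)\odot\widetilde{\boldsymbol{x}}+\cdots\bigr)\bigr|,
\]
i.e.\ $\delta_i$ compares $\boldsymbol{x}$ and $\widetilde{\boldsymbol{x}}$ under the \emph{same} mask with feature $i$ present; the Lipschitz bound dispatches the ``feature $i$ absent'' half and produces the uniform $\gamma_0$. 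The paper then asserts the additive identity $\sum_i\delta_i=|f(\boldsymbol{x})-f(\widetilde{\boldsymbol{x}})|$ (this, not Shapley efficiency for $\boldsymbol{\Phi}$, is where the positivity hypothesis is invoked), and the stated constants fall out of expanding $\sqrt{\sum_i(\delta_i+\gamma_0)^2}$ and using $\bigl(\sum_i\delta_i^2\bigr)^{1/2}\le\sum_i\delta_i$.

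Your route does not recover these constants, and the ``quadratic/Cauchy--Schwarz step'' you gesture at does not close the gap. The crude split already gives $\sqrt{M}\gamma_0+\delta+\gamma_0$ --- an \emph{additive} $\gamma_0$, not the multiplicative $\sqrt{2}\gamma_0\delta$ --- and going back to expand $\lVert\boldsymbol{\Phi}(\boldsymbol{x})-\boldsymbol{\Phi}(\widetilde{\boldsymbol{x}})\rVert_2^2$ with only the global information $|\phi_i(\boldsymbol{x})|\le\gamma_0$ and $\sigma:=\sum_i\phi_i(\widetilde{\boldsymbol{x}})\le\delta+\gamma_0$ yields at best $M\gamma_0^2+2\gamma_0\sigma+\sigma^2$, whose square root does not simplify to $(1+\sqrt{2}\gamma_0)\delta+\sqrt{M}\gamma_0$. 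The cross term $\sqrt{2}\gamma_0\delta$ in the statement comes specifically from the middle term $2\gamma_0\sum_i\delta_i$ in the paper's coordinatewise expansion together with $\sum_i\delta_i=\delta$; once you discard the per-coordinate $\delta_i$'s in favour of $\lVert\boldsymbol{\Phi}(\boldsymbol{x})\rVert_2$ and $\lVert\boldsymbol{\Phi}(\widetilde{\boldsymbol{x}})\rVert_2$ separately, there is no route back to that structure. (Relatedly, the paper never uses the efficiency identity $\sum_i\phi_i(\widetilde{\boldsymbol{x}})=f(\widetilde{\boldsymbol{x}})-f(\boldsymbol{x}_r)$ that your argument rests on.)
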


\begin{proof}
    In order to calculate the importance score $\boldsymbol{\Phi}_i(\boldsymbol{x}) \!:=\! \mathbb{E}_{\mathcal{S} \sim \mathbfcal{U} \setminus \{ i \}}\big[
    f(\widetilde{\boldsymbol{x}}_{\mathcal{S} \cup \{i\}}) - f(\widetilde{\boldsymbol{x}}_{\mathcal{S}}) \big]$ of the feature subset $\mathcal{S}$, we recast the formulation under the expression of $\bold{S} \!=\! \textbf{1}_{\mathcal{S}} \in \{0, 1\}^{M}$ as follows:
    \begin{equation}
        \boldsymbol{\Phi}_i(\boldsymbol{x}) = \mathbb{E}_{\bold{S} \in \{ 0, 1\}^{M-1}} \big[ f(\bold{S} \cup [1]_i \odot \boldsymbol{x} + (\textbf{1} - \bold{S} \cup [1]_i) \odot \boldsymbol{x}_r) - f(\bold{S} \odot \boldsymbol{x} + (\textbf{1} - \bold{S}) \odot \boldsymbol{x}_r ) \big],
        \label{apx:eq:exp}
    \end{equation}
    where $\bold{S} \cup [1]_i$ denotes not to mask feature $i$-th from perturbed masking $\bold{S}$.
    
    Following Equation~\ref{apx:eq:exp}, we can now discuss explanation difference by each feature. Without lost of generality, we consider the explanation difference of a single feature $i$ as follows,
    \begin{align}
        \notag |\boldsymbol{\Phi}_i&(\boldsymbol{x}) - \boldsymbol{\Phi}_i(\widetilde{\boldsymbol{x}})| \\
        \notag =&~~ \Big| \mathbb{E}_{\bold{S} } \big[ f(\bold{S} \cup [1]_i \odot \boldsymbol{x} + (\textbf{1} - \bold{S} \cup [1]_i) \odot \boldsymbol{x}_r ) - f(\bold{S} \odot \boldsymbol{x} + (\textbf{1} - \bold{S}) \odot \boldsymbol{x}_r ) \\
        \notag &~~~~ - \big( f(\bold{S} \cup [1]_i \odot \widetilde{\boldsymbol{x}} + (\textbf{1} - \bold{S} \cup [1]_i) \odot \boldsymbol{x}_r ) - f(\bold{S} \odot \widetilde{\boldsymbol{x}} + (\textbf{1} - \bold{S}) \odot \boldsymbol{x}_r ) \big) \big] \Big| \\
        \notag \leq&~~ \mathbb{E}_{\bold{S}} \Big[ \big| f(\bold{S} \cup [1]_i \odot \boldsymbol{x} + (\textbf{1} - \bold{S} \cup [1]_i) \odot \boldsymbol{x}_r ) - f(\bold{S} \odot \boldsymbol{x} + (\textbf{1} - \bold{S}) \odot \boldsymbol{x}_r ) \\
        \notag &~~~~ - \big( f(\bold{S} \cup [1]_i \odot \widetilde{\boldsymbol{x}} + (\textbf{1} - \bold{S} \cup [1]_i) \odot \boldsymbol{x}_r ) - f(\bold{S} \odot \widetilde{\boldsymbol{x}} + (\textbf{1} - \bold{S}) \odot \boldsymbol{x}_r ) \big) \big| \Big]  \\
        \notag =&~~ \frac{1}{2^{M-1}} \Big[ \sum_{\bold{S}} | f(\bold{S} \cup [1]_i \odot \boldsymbol{x} + (\textbf{1} - \bold{S} \cup [1]_i) \odot \boldsymbol{x}_r) - f(\bold{S} \cup [1]_i \odot \widetilde{\boldsymbol{x}} + (\textbf{1} - \bold{S} \cup [1]_i) \odot \boldsymbol{x}_r ) | \\
        &~~~~ + \sum_{\bold{S}} | f(\bold{S} \odot \widetilde{\boldsymbol{x}} + (\textbf{1} - \bold{S}) \odot \boldsymbol{x}_r) - f(\bold{S} \odot \boldsymbol{x} + (\textbf{1} - \bold{S}) \odot \boldsymbol{x}_r) | \Big]
        \label{apx:eq:thm1_exp}
    \end{align}
    
    Following Equation~\ref{apx:eq:thm1_exp}, the upper bound of the explanation difference can be derived by
        \begin{align}
            \notag |\boldsymbol{\Phi}_i&(\boldsymbol{x}) - \boldsymbol{\Phi}_i(\widetilde{\boldsymbol{x}})| \\
            \notag &\leq \frac{1}{2^{M-1}} \Big[ \sum_{\bold{S} } \big( | f(\bold{S} \cup [1]_i \odot \boldsymbol{x} + (\textbf{1} - \bold{S} \cup [1]_i) \odot \boldsymbol{x}_r) - f(\bold{S} \cup [1]_i \odot \widetilde{\boldsymbol{x}} + (\textbf{1} - \bold{S} \cup [1]_i) \odot \boldsymbol{x}_r) | \big) \\
            \notag &~~~~~ + \sum_{\bold{S} }  K_f \lVert \bold{S} \odot \widetilde{\boldsymbol{x}} - \bold{S} \odot \boldsymbol{x} \rVert_2 \Big] \\
            \notag &= \frac{1}{2^{M-1}} \Big[ \sum_{\bold{S} }  | f(\bold{S} \cup [1]_i \odot \boldsymbol{x} + (\textbf{1} - \bold{S} \cup [1]_i) \odot \boldsymbol{x}_r ) - f(\bold{S} \cup [1]_i \odot \widetilde{\boldsymbol{x}} + (\textbf{1} - \bold{S} \cup [1]_i) \odot \boldsymbol{x}_r ) |  \\
            \notag &~~~~~ + \sum_{\bold{S} } K_f \lVert \bold{S} \odot (\widetilde{\bold{S}} - \boldsymbol{1}) \odot \boldsymbol{x} \rVert_2 \Big] \\
            \notag &\leq \underbrace{\frac{1}{2^{M-1}} \sum_{\bold{S}} \Big( | f(\bold{S} \cup [1]_i \odot \boldsymbol{x} + (\textbf{1} - \bold{S} \cup [1]_i) \odot \boldsymbol{x}_r) - f(\bold{S} \cup [1]_i \odot \widetilde{\boldsymbol{x}} + (\textbf{1} - \bold{S} \cup [1]_i) \odot \boldsymbol{x}_r ) | \Big)}_{\delta_i} \\
            &~~~~~ +  K_f \lVert \boldsymbol{x} \rVert_2 
        \label{apx:eq:thm1_exp2}
        \end{align}

    Note that the difference of prediction scores $| f(\boldsymbol{x}) - f(\widetilde{\boldsymbol{x}})|$ is equal to the summation of contribution score among all features (i.e., $| f(\boldsymbol{x}) - f(\widetilde{\boldsymbol{x}})| = | \sum_{i=1}^M \delta_i |$).
    Since we consider the assumption $\min_{1\leq i \leq M} \boldsymbol{\Phi}_i(\widetilde{\boldsymbol{x}}) \geq 0$, we have $| \sum_{i=1}^M \delta_i | = \sum_{i=1}^M | \delta_i |$.
    In this manner, we have the upper bound of the explanation difference $\lVert \boldsymbol{\Phi}(\boldsymbol{x}) - \boldsymbol{\Phi}(\widetilde{\boldsymbol{x}}) \rVert_2$ as follows:
    \begin{align}
        \notag \lVert \boldsymbol{\Phi}(\boldsymbol{x}) - \boldsymbol{\Phi}(\widetilde{\boldsymbol{x}}) \rVert_2
        & = \sqrt{ \sum_{i=1}^M |\boldsymbol{\Phi}_i(\boldsymbol{x}) - \boldsymbol{\Phi}_i(\widetilde{\boldsymbol{x}})|^2 }\\
        \notag & \leq \sqrt{ \sum_{i=1}^M (\delta_i + K_f\lVert\boldsymbol{x}\rVert_2 )^2 } \\
        \notag & \leq \sqrt{\sum_{i=1}^M (\delta_i)^2} + \sqrt{2} \cdot ( K_f\lVert\boldsymbol{x}\rVert_2) \cdot \sqrt{\sum_{i=1}^M \delta_i} + \sqrt{\sum_{i=1}^M ( K_f\lVert\boldsymbol{x}\rVert_2 )^2} \\
        \notag & = \big\{ \lVert[ \delta_1, \delta_2, \cdots, \delta_M ]\rVert_2 + \sqrt{2} \gamma_0 | f(\boldsymbol{x}) - f(\widetilde{\boldsymbol{x}})| + \sqrt{M}\gamma_0 \big\} \\
        \notag & \leq \big\{ \lVert[ \delta_1, \delta_2, \cdots, \delta_M ]\rVert_1 \big\} + \sqrt{2} \gamma_0 \cdot \big\{ | f(\boldsymbol{x}) - f(\widetilde{\boldsymbol{x}})| \big\} + \sqrt{M}\gamma_0 \\
        \notag & = (1 + \sqrt{2}\gamma_0) | f(\boldsymbol{x}) - f(\widetilde{\boldsymbol{x}})| + \sqrt{M}\gamma_0
    \end{align}
    where $\gamma_0 = K_f \lVert \boldsymbol{x} \rVert_2$.
\end{proof}

\section{Proof of Theorem~\ref{thm:2}}
\label{appendix:proof_thm2}

\begin{theorem}[\textbf{Explanation Error Bound}]
Let $\boldsymbol{\eta}( \cdot | \theta_{\boldsymbol{\eta}})$ be $K_{\eta}$-Lipschitz continuous, and $\hat{\boldsymbol{\Phi}}(\cdot) = \boldsymbol{\eta}( \textsl{g}(\cdot |~ \theta_{\textsl{g}}) | \theta_{\boldsymbol{\eta}})$ be the estimated explanations.
If the encoder $\textsl{g}(\cdot | \theta_{\textsl{g}})$ is well-trained, satisfying $\lVert \boldsymbol{\Phi}(\boldsymbol{x}) \!-\! \hat{\boldsymbol{\Phi}}(\boldsymbol{x}) \rVert_2 \!\leq\! \mathcal{E} \!\in\! \mathbb{R}^{+}$, then the explanation error on testing instance $\boldsymbol{x}_k$ can be bounded as:
\begin{equation} 
    \lVert \boldsymbol{\Phi}(\boldsymbol{x}_k) - \hat{\boldsymbol{\Phi}}(\boldsymbol{x}_k) \rVert_2 \leq (1 + \sqrt{2}\gamma_k)| f(\boldsymbol{x}_k)- f(\widetilde{\boldsymbol{x}}_k^{+}) | + \sqrt{M}\gamma_k + \mathcal{E} +  K_{\eta} \lVert\boldsymbol{h}_{\widetilde{\boldsymbol{x}}_k^{+}} - \boldsymbol{h}_{x_k} \rVert_2,
\end{equation}
where $\widetilde{\boldsymbol{x}}_k^{+}$ is a compact positive instance, $\boldsymbol{h}_{\boldsymbol{x}_k} \!= \boldsymbol{\textsl{g}}(\boldsymbol{x}_k | \theta_{\textsl{g}})$, and $\gamma_k = K_f \lVert \boldsymbol{x}_k \rVert_2$.
\end{theorem}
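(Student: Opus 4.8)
\textbf{Proof proposal for Theorem~\ref{thm:2}.}

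The plan is to combine Theorem~\ref{thm:1} with the Lipschitz property of the explanation head $\boldsymbol{\eta}(\cdot \mid \theta_{\boldsymbol{\eta}})$ and the well-trained assumption on the encoder via the triangle inequality. First I would introduce the compact positive instance $\widetilde{\boldsymbol{x}}_k^{+}$ associated to the anchor $\boldsymbol{x}_k$ and split the target quantity $\lVert \boldsymbol{\Phi}(\boldsymbol{x}_k) - \hat{\boldsymbol{\Phi}}(\boldsymbol{x}_k) \rVert_2$ by inserting intermediate terms. A natural choice is to insert $\boldsymbol{\Phi}(\widetilde{\boldsymbol{x}}_k^{+})$ and $\hat{\boldsymbol{\Phi}}(\widetilde{\boldsymbol{x}}_k^{+})$, so that
\begin{align}
\notag \lVert \boldsymbol{\Phi}(\boldsymbol{x}_k) - \hat{\boldsymbol{\Phi}}(\boldsymbol{x}_k) \rVert_2
&\leq \lVert \boldsymbol{\Phi}(\boldsymbol{x}_k) - \boldsymbol{\Phi}(\widetilde{\boldsymbol{x}}_k^{+}) \rVert_2
+ \lVert \boldsymbol{\Phi}(\widetilde{\boldsymbol{x}}_k^{+}) - \hat{\boldsymbol{\Phi}}(\widetilde{\boldsymbol{x}}_k^{+}) \rVert_2 \\
\notag &\quad + \lVert \hat{\boldsymbol{\Phi}}(\widetilde{\boldsymbol{x}}_k^{+}) - \hat{\boldsymbol{\Phi}}(\boldsymbol{x}_k) \rVert_2 .
\end{align}
Each of the three resulting terms is then controlled by a different hypothesis of the theorem.

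For the first term, I would apply Theorem~\ref{thm:1} directly with anchor $\boldsymbol{x}_k$ and perturbed positive instance $\widetilde{\boldsymbol{x}}_k^{+}$ (which lies in $\mathcal{X}^+$ and is assumed to satisfy the nonnegativity condition on its importance scores), yielding the bound $(1 + \sqrt{2}\gamma_k)| f(\boldsymbol{x}_k) - f(\widetilde{\boldsymbol{x}}_k^{+})| + \sqrt{M}\gamma_k$ with $\gamma_k = K_f \lVert \boldsymbol{x}_k \rVert_2$. For the second term, the well-trained assumption on the encoder gives $\lVert \boldsymbol{\Phi}(\boldsymbol{x}) - \hat{\boldsymbol{\Phi}}(\boldsymbol{x}) \rVert_2 \leq \mathcal{E}$; I would invoke it at the point $\widetilde{\boldsymbol{x}}_k^{+}$ to get the $\mathcal{E}$ contribution. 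For the third term, since $\hat{\boldsymbol{\Phi}}(\cdot) = \boldsymbol{\eta}(\textsl{g}(\cdot \mid \theta_{\textsl{g}}) \mid \theta_{\boldsymbol{\eta}})$ and $\boldsymbol{\eta}$ is $K_{\eta}$-Lipschitz, I would bound it by $K_{\eta}\lVert \textsl{g}(\widetilde{\boldsymbol{x}}_k^{+}\mid\theta_{\textsl{g}}) - \textsl{g}(\boldsymbol{x}_k\mid\theta_{\textsl{g}})\rVert_2 = K_{\eta}\lVert \boldsymbol{h}_{\widetilde{\boldsymbol{x}}_k^{+}} - \boldsymbol{h}_{\boldsymbol{x}_k}\rVert_2$, using the notation $\boldsymbol{h}_{\boldsymbol{x}} = \textsl{g}(\boldsymbol{x}\mid\theta_{\textsl{g}})$. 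Summing the three bounds gives exactly Equation~\ref{eq:thm2}.

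The main obstacle — or rather the point that requires care — is justifying that the well-trained assumption, stated for $\boldsymbol{x}$ generically, may legitimately be applied at the auxiliary point $\widetilde{\boldsymbol{x}}_k^{+}$ rather than at the test point $\boldsymbol{x}_k$ itself; this is what forces the particular choice of intermediate insertion points, since applying Theorem~\ref{thm:1} requires passing through $\widetilde{\boldsymbol{x}}_k^{+}$ in the true-explanation space, while the training guarantee $\mathcal{E}$ is most naturally stated on the training distribution to which $\widetilde{\boldsymbol{x}}_k^{+}$ belongs. One should also confirm the hypotheses of Theorem~\ref{thm:1} transfer: the Lipschitz constant $K_f$ is a property of $f$ and is unchanged, and the sign condition $\min_i \phi_i(\widetilde{\boldsymbol{x}}_k^{+}) \geq 0$ is precisely the standing assumption on compact positive instances. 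The remaining steps are routine applications of the triangle inequality and Lipschitz continuity.
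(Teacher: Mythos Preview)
Your proposal is correct and follows essentially the same route as the paper's proof: the paper also inserts $\boldsymbol{\Phi}(\widetilde{\boldsymbol{x}}_k^{+})$ and $\hat{\boldsymbol{\Phi}}(\widetilde{\boldsymbol{x}}_k^{+})$, applies the triangle inequality, and then bounds the three resulting terms respectively by Theorem~\ref{thm:1}, the well-trained assumption $\mathcal{E}$ (applied at $\widetilde{\boldsymbol{x}}_k^{+}$), and the $K_{\eta}$-Lipschitz property of $\boldsymbol{\eta}$. Your remark about why the $\mathcal{E}$ bound is invoked at $\widetilde{\boldsymbol{x}}_k^{+}$ rather than at $\boldsymbol{x}_k$ is a point the paper leaves implicit.
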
 

\begin{proof}
Without loss of generality, we consider $\boldsymbol\ell_2$ norm to evaluate the distance between predicted explanation scores and ground-truth explanation scores.
For all $\boldsymbol{x}_k \in \mathcal{C}$, we have
\begin{align}
    \lVert \boldsymbol{\Phi}(\boldsymbol{x}_k) - \hat{\boldsymbol{\Phi}}(\boldsymbol{x}_k) \rVert_2
    \notag &= \lVert \boldsymbol{\Phi}(\boldsymbol{x}_k) - \boldsymbol{\Phi}(\widetilde{\boldsymbol{x}}_k^{+}) + \boldsymbol{\Phi}(\widetilde{\boldsymbol{x}}_k^{+}) - \hat{\boldsymbol{\Phi}}(\widetilde{\boldsymbol{x}}_k^{+}) + \hat{\boldsymbol{\Phi}}(\widetilde{\boldsymbol{x}}_k^{+}) - \hat{\boldsymbol{\Phi}}(\boldsymbol{x}_k) \rVert_2 \\
    \notag &\leq \lVert \boldsymbol{\Phi}(\boldsymbol{x}_k) - \boldsymbol{\Phi}(\widetilde{\boldsymbol{x}}_k^{+}) \rVert_2 +  \lVert\boldsymbol{\Phi}(\widetilde{\boldsymbol{x}}_k^{+}) - \hat{\boldsymbol{\Phi}}(\widetilde{\boldsymbol{x}}_k^{+}) \rVert_2 + \lVert \hat{\boldsymbol{\Phi}}(\widetilde{\boldsymbol{x}}_k^{+}) - \hat{\boldsymbol{\Phi}}(\boldsymbol{x}_k) \rVert_2 \\
    \notag &\leq (1 + \sqrt{2}\gamma_k)| f(\boldsymbol{x}_k)- f(\widetilde{\boldsymbol{x}}_k^{+}) | + \sqrt{M}\gamma_k + \mathcal{E} + K_{h} \lVert\boldsymbol{\textsl{g}}(\widetilde{\boldsymbol{x}}_k) - \boldsymbol{\textsl{g}}({\boldsymbol{x}_k})\rVert_2 \\
    \notag & = (1 + \sqrt{2}\gamma_k)| f(\boldsymbol{x}_k)- f(\widetilde{\boldsymbol{x}}_k^{+}) | + \sqrt{M}\gamma_k + \mathcal{E} + K_{h} \lVert\boldsymbol{h}_{\widetilde{\boldsymbol{x}}_k} - \boldsymbol{h}_{\boldsymbol{x}_k} \rVert_2
\end{align}
\end{proof}

\section{Details about Datasets}
\label{appendix:dataset_detail}

The experiments are conducted on two tabular datasets and one image dataset. The details of the datasets are provided as follows:

\begin{itemize}[leftmargin=2em, topsep=-1mm]
    \item \textbf{Census Income:} A collection of human social information with 26048 samples for training and validating; and 6513 samples for testing. Each instance has five continuous features and eight categorical features.

    \item \textbf{Bankruptcy:} A financial dataset contains 5455 samples of companies in the training set and validating set; and 1364 instances for the testing set. Each sample has 96 features characterizing each company and whether it went bankrupt.

    \item \textbf{CIFAR-10:} An image dataset with 60000 images in 10 different classes, where each image is composed of 32×32 pixels. We follow the original dataset division on training, validating, and testing process.

\end{itemize}

\section{Details about Baseline Methods}
\label{appendix:baseline_detail}

Some of the baseline algorithms are implemented under the open-source package~\footnote{\url{https://captum.ai}} and the hyper-parameters are all decided with the optimal model convergence. In our experiments, the RTX frameworks, such as \Algnameabbr{} and FastSHAP, are typically built on the training set and evaluated on the testing dataset.

\noindent
\textbf{Tabular Dataset:} We here provide some detailed information about the baselines on tabular dataset. Supervised RTX: A supervised RTX-based MLP model trains with raw features of data instances and ground-truth explanation labels from scratch. The layer number in Supervised RTX is set to be six for purpose of fair comparison to \Algnameabbr{}.
FastSHAP: A state-of-the-art RTX method which adopts an DNN model to universally learn the distribution of approximated Shapley label~\citep{jethani2021fastshap}.
KernelSHAP (KS): The model proposes the weighted linear regressions to estimate the Shapley additive explanations from the prediction model scores~\citep{kokhlikyan2020captum}.
Permutation Sampling (PS): PS estimates the feature attribution based on calculating the sensitivity gap of the scores from prediction model while inputting randomly masking data features~\citep{mitchell2021sampling}. To achieve the competitive performance on the baselines, we demonstrate the following settings to the KS and PS in the tabular experiments. On Census Income dataset, the numbers of model evaluations on KS is set from $2^5$ to $2^{10}$, and the numbers of model evaluations on PS is given from $2^4$ to $2^{10}$. On Bankruptcy dataset, the s of model evaluations on KS is set from $2^3$ to $2^{11}$, and numbers of model evaluations on PS are from $2^3$ to $2^{10}$. For a fair comparison, \Algnameabbr{} and Supervised RTX adopt the same loss function while conducting on the same explanation task.

\noindent
\textbf{Image Dataset:} We evaluate the results with two non-amortized Shapley-based estimators: DeepSHAP and KS, where the first work utilizes a linear composition rule to estimate the Shapley values on the non-linear neural networks. Moreover, gradient-based methods are known for their efficiency compared to Shapley-based estimators. We list some gradient-based baselines: Saliency, Integrated Gradients (IG), SmoothGrad, and GradCAM. The mentioned four approaches derive the explanation based on the gradient values of prediction models. Lastly, to evaluate the efficacy of the image explainers, we adopt a state-of-the-art baseline, FastSHAP, to test the explanation generating speed and accuracy. To achieve the better performance on baselines, the evaluation times on baselines is basically decided on grid search once achieving competitive performance.


\section{Experiment Details}
\label{appendix:implement_detail}
We conduct the experiments on tabular dataset and image datasets.
For the tabular dataset, we consider two common explanation tasks: feature attribution and importance ranking task. For the image dataset, the explanation results are presented using heatmap in the case study and evaluated by including and excluding the important pixels for perturbed image classification.

\subsection{Experiments on Tabular Dataset} 
We verify the feature ranking task by using \Algnameabbr{}-CE and the feature attribution task by exploiting \Algnameabbr{}-MSE. The experiment on each tabular dataset follows the pipeline of \textbf{\textit{Target Model Training}}, \textbf{\textit{Explanation Benchmarks}} and \textbf{\textit{Explainer Implementing}}. Each step is shown as follows.

\noindent
\textbf{\textit{Target Model Training}}: We exploit different prediction models $f(\cdot)$ on three datasets to evaluate the model-agnostic property of \Algnameabbr{}.
AutoInt~\citep{song2019autoint} is adopted for the Census Income dataset, and the MLP model is for the Bankruptcy dataset. The two prediction models are trained until the convergence, and the implementation is based on the DeepCTR~\footnote{\url{https://github.com/shenweichen/deepctr}} package~\citep{shen2017deepctr}. Due to the tasks in two datasets, a binary cross entropy loss is given as the loss function of AutoInt and MLP model for the Census Income and Bankruptcy datasets, respectively.
All hyper-parameters on target model training are decided by grid search throughout the classification results, including model layers, hidden units, etc.

\noindent
\textbf{\textit{Explanation Benchmarks}}: A brute-force algorithm for calculating the exact Shapley value is adopted for the Census Income dataset as the explanation labels. However, the Bankruptcy dataset contains many features (96 features), so it is hard to gain the exact Shapley value for the Bankruptcy dataset due to the extremely high computational cost. We hereby adopt the proximity of Shapley values as the explanation labels by using Antithetical Permutation Sampling(APS)~\citep{mitchell2021sampling,lomeli2019antithetic} to convergence. This is because APS has shown to converge to exact Shapley values when countering high permutation times on tabular datasets. In this work, we set the model permutation times as $2 \times 10^5$ to generate the ground-truth labels.

\textbf{\textit{Explainer Implementation}}: The explainer is implemented on the learning of latent explanation and fine-tuning.
The explanation encoder and the explanation head are designed with the model structures and are learned with the hyper-parameters in Table~\ref{tab:hyper_setting}. Following the existing works~\citep{jethani2021fastshap,dhurandhar2018explanations}, we adjust the feature importance scores via additive efficient normalization~\citep{RUIZ1998109} for the feature attribution task. 

\subsection{Experiments on Image Dataset} 
Our experiments on the image dataset are conducted under \Algnameabbr{}-MSE. We focus on the feature attribution task on the image datasets, which follows the pipeline of \textbf{\textit{Target Model Training}}, \textbf{\textit{Explanation Benchmarks}} and \textbf{\textit{Explainer Implementing}}. Each step is shown as follows.

\noindent
\textbf{\textit{Target Model Training}}: We utilize ResNet-18 as the prediction models $f(\cdot)$ in CIFAR-10 dataset. We train the target model from scratch until the convergence. All hyper-parameters on model training are decided by grid search throughout the classification results and follow the training setting of the target model from~\citep{jethani2021fastshap}.

\noindent
\textbf{\textit{Explanation Benchmarks}}: We calculate the approximation to Shapley values as the training labels by adopting KS until the convergence~\citep{covert2021improving,jethani2021fastshap}. The estimated values from KS can infinitely approach exact Shapley values when encountering the optimal convergence~\citep{lundberg2017unified}. In this work, we set the model permutation times as 2048 to generate the ground-truth labels.

\noindent
\textbf{\textit{Explainer Implementation}}:
The explanation encoder and the explanation head are designed with the model structures and are learned with the hyper-parameters in Table~\ref{tab:hyper_setting}.
Following the existing work~\citep{jethani2021fastshap}, we adopt the surrogate model with the same structure as ResNet-18. The surrogate model is distilled from the original prediction model with masked input images to overcome the effect of out-of-distribution in masking perturbation~\citep{frye2020shapley}.
Following existing work~\citep{jethani2021fastshap}, we adjust the feature attribution scores through additive efficient normalization~\citep{RUIZ1998109} for generating the feature importance scores. Moreover, we follow the setting from the existing work~\citep{jethani2021fastshap} to conduct the bootstrapping evaluation. The total testing candidates are chosen to be 1000 and we randomly select 666 instances with 20 times repetitions for getting the bootstrap AUCs and $\mathrm{Log}$-$\mathrm{odds}$. Following the previous work in computer vision domain, the outcomes of \Algnameabbr{} are post-processed through the moving average for smoothing visualization.

\begin{table}[h!]
\vspace{2mm}
\centering
\begin{tabular}{l|l|c|c|c}
\hline
\hline
& Dataset & Census Income & Bankrupcy & CIFAR-10 \\
\hline\hline
\multirow{8}{*}{\makecell[l]{Explanation \\ Representation \\ Learning}} & Target Model & AutoInt & MLP & ResNet-18 \\
& Explanation Encoder $\textsl{g}(\cdot)$ & 3-layer MLP & 6-layer MLP  & ResNet-18 \\
& Optimizer & Adam & Adam & Adam \\
& Synthetic Positive Set & 30 & 300 & 300 \\
& Batch Size & 1024 & 1024 & 1024 \\
& Learning Rate & $5 \times 10^{-3}$ & $5 \times 10^{-3}$ & $5 \times 10^{-3}$\\
& Temperature $\tau$ & 0.02 & 0.03 & 0.02\\
& Binomial Distribution $\lambda$ & 0.5 & 0.5 & 0.5\\
& Reference Value & Mean Value & Mean Value & Zero Value\\
\hline\hline
\multirow{3}{*}{Fine-tuninig}
& Explanation Head $\boldsymbol{\eta}(\cdot)$ & 3-layer MLP & 3-layer MLP & 3-layer MLP \\
& $\text{Weight Decay}_{\text{\Algnameabbr{}-MSE}}$ &  $10^{-3}$ to $10^{-6}$ &  $10^{-3}$ to $10^{-5}$ & $10^{-3}$ to $10^{-6}$\\
& $\text{Weight Decay}_{\text{\Algnameabbr{}-CE}}$ & 0 & 0 & -- \\
\hline
\hline
\end{tabular}
\caption{Hyper-parameters and model structures settings in \Algnameabbr{}.}
\label{tab:hyper_setting}
\vspace{-3mm}
\end{table}

\newpage
\subsection{Computation Infrastructure}
\label{appendix:infrastructure}
For a fair comparison of testing algorithmic throughput, the experiments are conducted based on the following physical computing infrastructure in Table~\ref{tab:computing_infrastructure}. 

\begin{table}[h!]
\centering
\begin{tabular}{l|c}
\hline
Device Attribute & Value \\
\hline
Computing infrastructure & GPU \\
GPU model & Nvidia-A40 \\
GPU number & 1 \\
GPU Memory & 46068 MB \\
\hline
\end{tabular}
\caption{Computing infrastructure for the experiments.}
\label{tab:computing_infrastructure}
\vspace{-3mm}
\end{table}

\section{Related Work}
Machine learning has been widely applied in a variety of domains, such as recommender systems~\citep{zha2022autoshard, wang2020skewness}, anomaly detection~\citep{li2021autood, lai2021revisiting}, and voice recognition~\citep{chandolikar2022voice, minaee2023biometrics}. Despite the advancements in ML, providing transparency in DNN models still remains a challenge, especially in yielding real-time model explanations for the deployment on real-time systems. Existing work aims to accelerate the derivation of explanations via two different ways: non-amortized methods and real-time explainers.

\textbf{Non-amortized Method.}
Existing work of non-amortized methods can be categorized into three groups.
The first group of methods adopts linear regressions to fit the non-amortized explanation, such as LIME~\citep{ribeiro2016should} and KS~\citep{lundberg2017unified}.
Another group of methods adopts the preceding difference of the value function for the explanation, such as RISE~\citep{petsiuk2018rise}, Permutation Sampling~\citep{mitchell2021sampling} and SHEAR~\citep{wang2022accelerating}.
The last group estimates the gradient towards the input data for the explanation, such as the GradCAM~\citep{selvaraju2017grad}, Integrated Gradient~\citep{sundararajan2017axiomatic} and SmoothGrad~\citep{smilkov2017smoothgrad}.
Even though the non-amortized method can provide faithful explanation for DNN models, this group of methods suffers from high computational complexity since each data instance requires one non-amortized explainer to yield the explanation. Some gradient-based methods, such as \citep{sundararajan2017axiomatic, smilkov2017smoothgrad}, are important explanation methods that provide a relatively faster explanation. However, gradient-based methods need extra time to perform the sampling process on each instance while generating the model explanation. The explanation derivation time is still highly dependent on the number of sampling and tested instances, which means that gradient-based methods are insufficient to be the real-time explainer.

\textbf{Real-time Explainer (RTX).}
Unlike the non-amortized methods, the RTX framework maintains a unified explainer to generate the explanation among each data instance.
This way, the explanation can be generated via a single feed-forward process of the explainer, which is much faster than the non-amortized methods but sightly degrades the efficacy. The learning strategy of the RTX framework formulates the explainer learning by giving strong assumptions on prior feature distribution~\citep{chen2018learning, dabkowski2017real, kanehira2019learning}. Existing work~\citep{chen2018learning} utilizes the instance-wise feature selection by maintaining a feature masking generator via maximizing the mutual information between selected features and corresponding labels. A well-trained feature masking generator is able to provide real-time explanation under a single feed-forward process.
Another framework of RTX is to adopt the exact or approximated Shapley values as the ground-truth labels to learn the explainers~\citep{wang2021shapley,jethani2021fastshap,covert2022learning, schwab2019cxplain}.
However, the exploitation of exact Shapley values suffers from extremely high computational complexity.
To address this problem, FastSHAP~\citep{jethani2021fastshap} proposes a Monte-Carlo-based method to learn the explainer under the RTX framework.
Specifically, FastSHAP generates the approximated Shapley values by randomly sampling batches of feature masks during the training process. Meanwhile, it updates the explainer to minimize the mean-square error between the overall contribution scores of masked features and outputs from the DNN explainer.
FastSHAP enforces the explanation performance without utilizing the ground-truth Shapley value, which can be supported by the experiment results. The choice between non-amortized methods and RTX may depend on the use case~\citep{rong2022towards, chuang2023efficient}, where there exists a trade-off between efficacy and efficiency on non-amortized methods and RTX.

\section{Measurement of $\mathrm{Log}\text{-}\mathrm{odds}$}
\label{appendix:log-odds}

\noindent
\textbf{Definition.}
$\mathrm{Log}\text{-}\mathrm{odds}$ indicates the confidence of the prediction given by the target model $f(\boldsymbol{x})$.
Formally, $\mathrm{Log}\text{-}\mathrm{odds}$ is defined as the log-likelihood ratio of $f(\boldsymbol{x})$~\citep{schwab2019cxplain} as follows,
\begin{equation}
    \mathrm{Log}\text{-}\mathrm{odds} = \log \frac{p}{1-p} ~,
\end{equation}
where $p$ is the output probability of the target model $f(\boldsymbol{x})$.

\noindent
\textbf{Inclusion and Exclusion AUC of $\mathbf{Log}$-$\mathbf{odds}$.}
To evaluate the estimated feature attribution on image data, we conduct the exclusion and inclusion tests according to the feature importance scores. 
Once the important features are removed from the input instances, the target model $f(\boldsymbol{x})$ is expected to drop drastically on $\mathrm{Log}$-$\mathrm{odds}$ which leads to a low exclusion AUC. On the contrary, it is expected to have a high inclusion AUC of $\mathrm{Log}$-$\mathrm{odds}$ since the important features are orderly added back to the input instances of the target model. The remaining experimental settings follow FastSHAP~\citep{jethani2021fastshap}.

\textbf{Inclusion and Exclusion AUC of $\Delta\mathbf{Log}$-$\mathbf{odds}$:} Besides exploiting $\mathrm{Log}\text{-}\mathrm{odds}$ on measuring the performance, we additionally test \Algnameabbr{} with $\Delta\mathbf{Log}$-$\mathbf{odds}$ on the inclusion and exclusion task. $\Delta\mathbf{Log}$-$\mathbf{odds}$ calculates the preceding difference between the $\mathrm{Log}$-$\mathrm{odds}$ of original input instances and masked instances. Different from $\mathbf{Log}$-$\mathbf{odds}$ and Top-1 Accuracy, $\mathrm{Log}$-$\mathrm{odds}$ has the opposite indication where higher exclusion AUC and lower inclusion AUC represent better performance. Table~\ref{tab:deltalogodds} shows the results of the exclusion and inclusion under 1000 images. We observe that \Algnameabbr{} outperforms other non-amortized baseline methods on both inclusion task and exclusion task. \Algnameabbr{} performs better than FastSHAP on the inclusion task and is competitive with FastSHAP on the exclusion task.

\begin{table}[h]
\centering
	\begin{tabular}{l cc}
        \toprule
        & \multicolumn{2}{c}{$\Delta\mathrm{Log}$-$\mathrm{odds}$} \\
        \cmidrule(lr){2-3}
        & Exclusion & Inclusion \\
        \midrule 
        \Algnameabbr{} & \textbf{2.790} $\pm$ 0.060 & \textbf{1.615} $\pm$ 0.026 \\
        FastSHAP & \textbf{2.896} $\pm$ 0.045 & 1.642 $\pm$ 0.033 \\
        KernelSHAP & 2.449 $\pm$ 0.055 & 1.687 $\pm$ 0.031 \\
        Saliency & 2.075 $\pm$ 0.052 & 2.055 $\pm$ 0.036 \\
        IG & 1.818 $\pm$ 0.053 & 2.040 $\pm$  0.039 \\
        Smoothgrad & 2.287 $\pm$ 0.054 & 2.111 $\pm$ 0.036 \\
        Gradcam & 1.824 $\pm$ 0.049 & 2.062 $\pm$ 0.037 \\
        Deepshap & 1.870 $\pm$ 0.054 & 2.040 $\pm$ 0.035 \\
        \bottomrule
    \end{tabular}%
    \caption{Exclusion and Inclusion AUCs on Log-odds.}
\label{tab:deltalogodds}
\end{table}        

\newpage
\section{Additional Results on Image Dataset}
\label{appendix:addresult}
We demonstrate more explanation results on CIFAR-10 generated by \Algnameabbr{} compared to other baselines. The results show that \Algnameabbr{} can identify more human-understandable explanations toward other baselines.
\begin{figure}[h]
    \centering
    \includegraphics[width=1.0\textwidth]{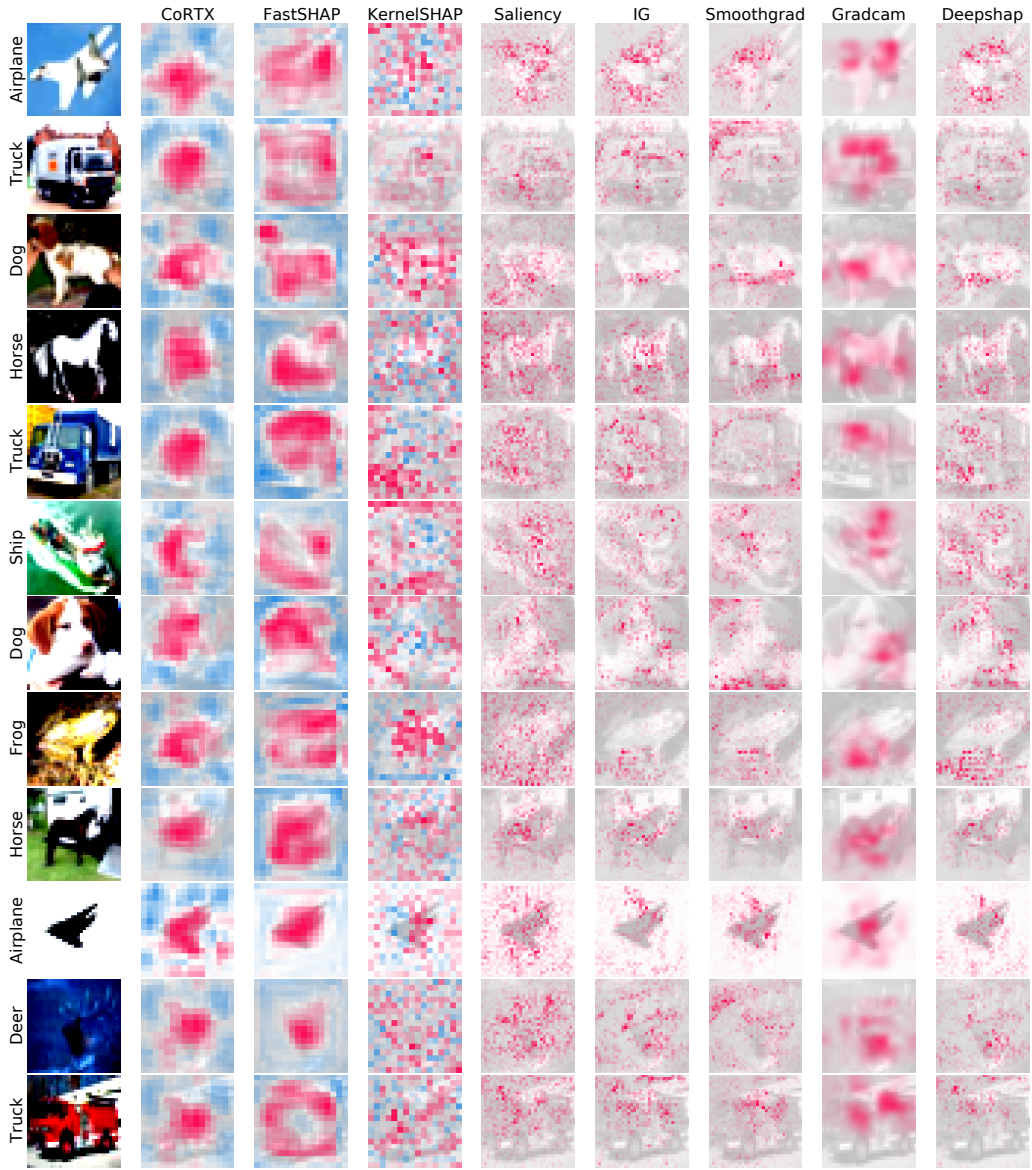} 
    \caption{Explanations generated on CIFAR-10 Dataset.}
    \label{fig:cifarapx} 
\end{figure}

\newpage
\section{Additional Baseline Experiments}
We conduct more experiments on CIFAR-10 dataset to compare CoRTX with two more baseline methods: RISE~\citep{petsiuk2018rise} and XRAI~\citep{kapishnikov2019xrai}. All hyper-parameters are decided with the optimal model convergence. The results show that CoRTX can provide more human-understandable explanations than RISE and XRAI.
\begin{figure}[h]
    \centering
    \includegraphics[width=0.75\textwidth]{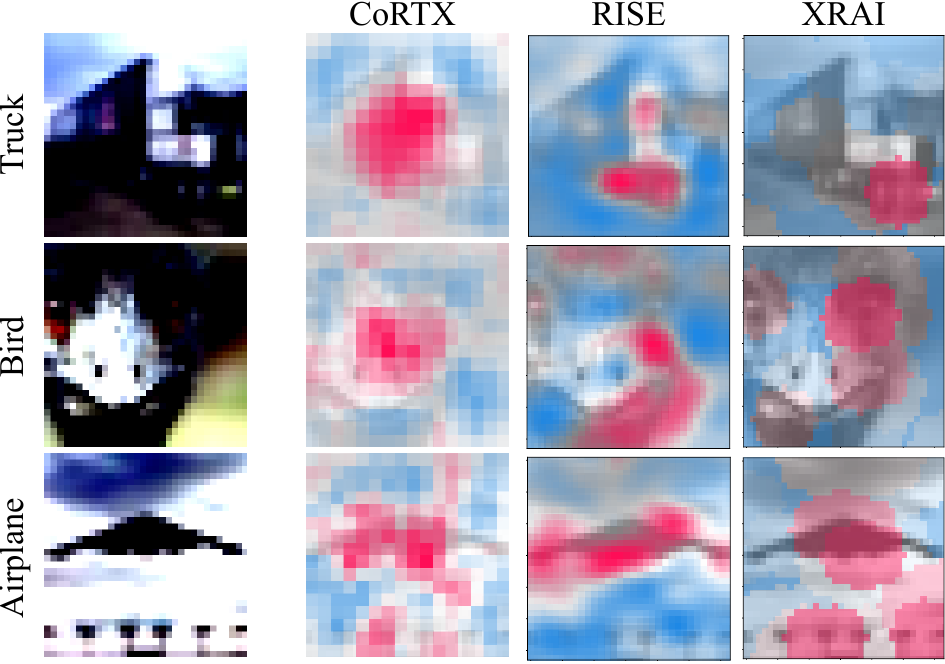} 
    \caption{Comparison with the generated Explanation on CIFAR-10 Dataset.}
    \label{fig:baseapx} 
\end{figure}  

We further conduct the experiments to compare L2X~\citep{chen2018learning} with CoRTX on the Adult dataset. The hyper-parameters of L2X and CoRTX are all decided with the optimal model convergence. In the experiment, we choose Rank ACC and Faithfulness~\citep{liu2021synthetic} as the metric, as the outputs from L2X are not Shapley-based values. Faithfulness is the metric to evaluate explanation tasks without any ground truth. As shown in Table~\ref{tab:l2xcomp}, \Algnameabbr{} outperforms L2X on two metrics, indicating the better capability for deriving model explanation.

\begin{table}[h!]
\centering
\begin{tabular}{l|cc}
\hline
Methods & Rank ACC & Faithfulness \\
\hline
L2X~\citep{chen2018learning} & 0.0847 $\pm$ 0.0011 & 0.542 \\
\Algnameabbr{}-MSE & 0.6368 $\pm$ 0.0067 & \textbf{0.835}\\
\Algnameabbr{}-CE & \textbf{0.7680} $\pm$ 0.0031 & -- \\
\hline
\end{tabular}
\caption{Comparison with L2X on Adult dataset.}
\label{tab:l2xcomp}
\vspace{-3mm}
\end{table}

\newpage
\section{Evaluation of \Algnameabbr{} on Surrogate Models}
Besides the quantitative evaluation setting in Section~\ref{sec:quant} that follows~\citep{rong2022consistent}. We further conduct experiments by retraining the surrogate model, where the experiment settings follow from~\citep{jethani2021fastshap, jethani2021have}. 
The following table shows the results of the exclusion and inclusion AUC under 1000 images. We observe that CoRTX outperforms other non-amortized baselines on both inclusion and exclusion tasks. CoRTX performs better than FastSHAP (a SOTA of supervised RTX method) on the exclusion task and is competitive with FastSHAP on the inclusion task. The results reveal a similar pattern as using the original prediction model in the quantitative experiments, which proves that CoRTX has the capability to derive effective model explanations on either of the experiment settings.

\begin{table}[h]
\centering
	\begin{tabular}{l cc}
        \toprule
        & \multicolumn{2}{c}{Top-1 Accuracy } \\
        \cmidrule(lr){2-3}
        & Exclusion & Inclusion \\
        \midrule 
        \Algnameabbr{} &	$0.372 \pm 0.012$ & $0.772 \pm 0.012$ \\
        FastSHAP & $0.386 \pm 0.012$ & $0.767 \pm 0.013$ \\
        KernelSHAP &	$0.406 \pm 0.011$ &	$0.767 \pm 0.012$   \\
        Saliency &	$0.543 \pm 0.012$ &	$0.703 \pm 0.012$  \\
        IG &	$0.581 \pm 0.012$ &	$0.704 \pm 0.012$  \\
        Smoothgrad &	$0.480 \pm 0.012$ &	$0.695 \pm 0.012$  \\
        Gradcam &	$0.573 \pm 0.012$ &	$0.719 \pm 0.013$  \\
        Deepshap &	$0.573 \pm 0.012$ &	$0.706 \pm 0.012$  \\
        \bottomrule
    \end{tabular}%
    \caption{Evaluation of \Algnameabbr{} on Surrogate Models.}
    \label{tab:exp_sorrogate}
\end{table}    

\section{$\ell_2\text{-error}$ of PS/KS with High Permutation}

We conducted additional experiments on the Adult dataset to show that PS/KS with high permutation is very similar to the exact values of Shapley values, especially when the perturbation is above 4000 times. This shows that high perturbated KS/PS is sufficient to generate the ground truth. There is no need to do higher permutation sampling for KS/PS to generate the ground truth. This is because the error remains extremely small as well, but the execution time grows drastically.

\begin{table}[h!]
    \centering
    \begin{tabular}{l|c|c|c|c}
    \toprule
    Number of permutation & 2000 & 4000 & 6000 & 8000 \\
    \hline
    KernelSHAP ($\boldsymbol\ell_2\text{-error}$) & 0.0038 & 0.0027 & 0.0021 & 0.0018 \\
    APS ($\boldsymbol\ell_2\text{-error}$) & 0.00031 & 0.00025 & 0.00020 & 0.00017 \\
    KernelSHAP (sec/instance) & 0.3351 & 0.6816 & 0.9869 & 1.3089 \\
    APS (sec/instance) & 0.0621 & 0.1164 & 0.1705 & 0.2222 \\
    \bottomrule
    \end{tabular}
    \caption{$\ell_2\text{-error}$ of PS/KS with High Permutation.}
    \label{tab:exp_ps1}
\end{table}

\section{Additional Notation Clarification}
\label{appendix:notation}
In this section, Table~\ref{tab:nota42} shows the detailed information of the notations in Section 4.2 for clarification.

\begin{table}[h!]
\centering
\begin{tabular}{l|c|c}
\hline
Notation & Definition \& Description  & Dimension \\
\hline
$\hat{\mathbf{r}}_j$ & 
\makecell[c]{Predicted ranking index score list of feature $x_j$ from CoRTX.\\ Each index score in $\hat{\mathbf{r}}_j$ represents the score of feature $x_j$ \\ that is ranked at this index.} & $\hat{\mathbf{r}}_j \in \mathbb{R}^M$ \\
\hline
$\hat{\mathrm{r}}_j$ & \makecell[c]{Predicted ranking index from CoRTX, \\ where $\hat{\mathrm{r}}_j \!=\! \arg \max \hat{\mathbf{r}}_j$ for $1 \!\leq\! j \!\leq\! M$} & $\hat{\mathrm{r}}_j \in \mathbb{R}^M$\\
\hline
$\mathrm{r}_j$ & The ground-truth ranking label sorted from Shapley values. & $\mathrm{r}_j \in \mathbb{R}^M$  \\
\hline
\end{tabular}
\caption{Notations in Section 4.2.}
\label{tab:nota42}
\vspace{-3mm}
\end{table}

\newpage
\section{$\ell_2\text{-error}$ and execution time of PS/KS with High Permutation}
We conduct an experiment on the Adult dataset and indicate the explanation performance and the execution time of PS/KS and CoRTX in the following table. It is observed that there is a trade-off between the speed and performance of KS/PS. A fair comparison should keep the execution time as close as possible. According to the results, CoRTX generates the model explanation in around 0.00015 seconds per instance, while high-permutated PS/KS requires at least 2000x longer execution time. We believe that comparing CoRTX with low-permutated KS/PS is fairer than with high-permutated KS/PS. All experimental settings follow FastSHAP~\citep{jethani2021fastshap}.

\begin{table}[h!]
    \centering
    \begin{tabular}{l|c|c|c|c|c|c}
    \toprule
     & KS-300 & KS-2000 & KS-4000 & KS-6000 & KS-8000 & CoRTX-MSE \\
    \midrule
    $\boldsymbol\ell_2\text{-error}$ & 0.0189 & 0.0038 & 0.0027 & 0.0021 & 0.0018 & 0.007 \\
    Time (sec) & 0.1477 & 0.3351 & 0.6816 & 0.9869 & 1.3089 & 0.00015 \\
    \midrule
     & APS-300 & APS-2000 & APS-4000 & APS-6000 & APS-8000 & CoRTX-MSE \\
    \midrule
    $\boldsymbol\ell_2\text{-error}$ & 0.0171 & 0.00031 & 0.00025 & 0.00020 & 0.00017 & 0.007 \\
    Time (sec) & 0.0380 & 0.0621 & 0.1164 & 0.1705 & 0.2222 & 0.00015 \\
    \bottomrule
    \end{tabular}
    \caption{$\ell_2\text{-error}$ and execution time of PS/KS with high permutation.}
    \label{tab:exp_ps2}
\end{table}

\end{document}